\newcommand{\cor}[1]{\textcolor{blue}{#1}}
\title{Refining Latent Homophilic Structures over Heterophilic Graphs for \\Robust Graph Convolution Networks}
\author{
    Chenyang Qiu\textsuperscript{\rm 1},
    Guoshun Nan\textsuperscript{\rm 1}\thanks{Guoshun Nan is the corresponding author.},
    Tianyu Xiong\textsuperscript{\rm 1},
    Wendi Deng\textsuperscript{\rm 1},
    Di Wang\textsuperscript{\rm 1},
    Zhiyang Teng\textsuperscript{\rm 2},
    Lijuan Sun\textsuperscript{\rm 1},
    Qimei Cui\textsuperscript{\rm 1},
    Xiaofeng Tao\textsuperscript{\rm 1}}
\begin{document}

\maketitle

\begin{abstract}
Graph convolution networks (GCNs) are extensively utilized in various graph tasks to mine knowledge from spatial data. Our study marks the pioneering attempt to quantitatively investigate the GCN robustness over omnipresent heterophilic graphs for node classification. We uncover that the predominant vulnerability is caused by the structural out-of-distribution (OOD) issue. This finding motivates us to present 
a novel method that aims to harden GCNs by automatically learning \textbf{L}atent \textbf{H}omophilic \textbf{S}tructures over heterophilic graphs. We term such a methodology as \textbf{LHS}. To elaborate, our initial step involves learning a latent structure by employing a novel self-expressive technique based on multi-node interactions. Subsequently, the structure is refined using a pairwisely constrained dual-view contrastive learning approach. We iteratively perform the above procedure, enabling a
GCN model to aggregate information in a homophilic way on
heterophilic graphs. Armed with such an adaptable structure, we can properly mitigate the structural OOD threats over heterophilic graphs. Experiments on various benchmarks show the effectiveness of the proposed LHS approach for robust GCNs.
\end{abstract}



\section{Introduction}
\label{sec:intro}
\begin{figure}[t]
  \centering
  \includegraphics[scale=0.35]{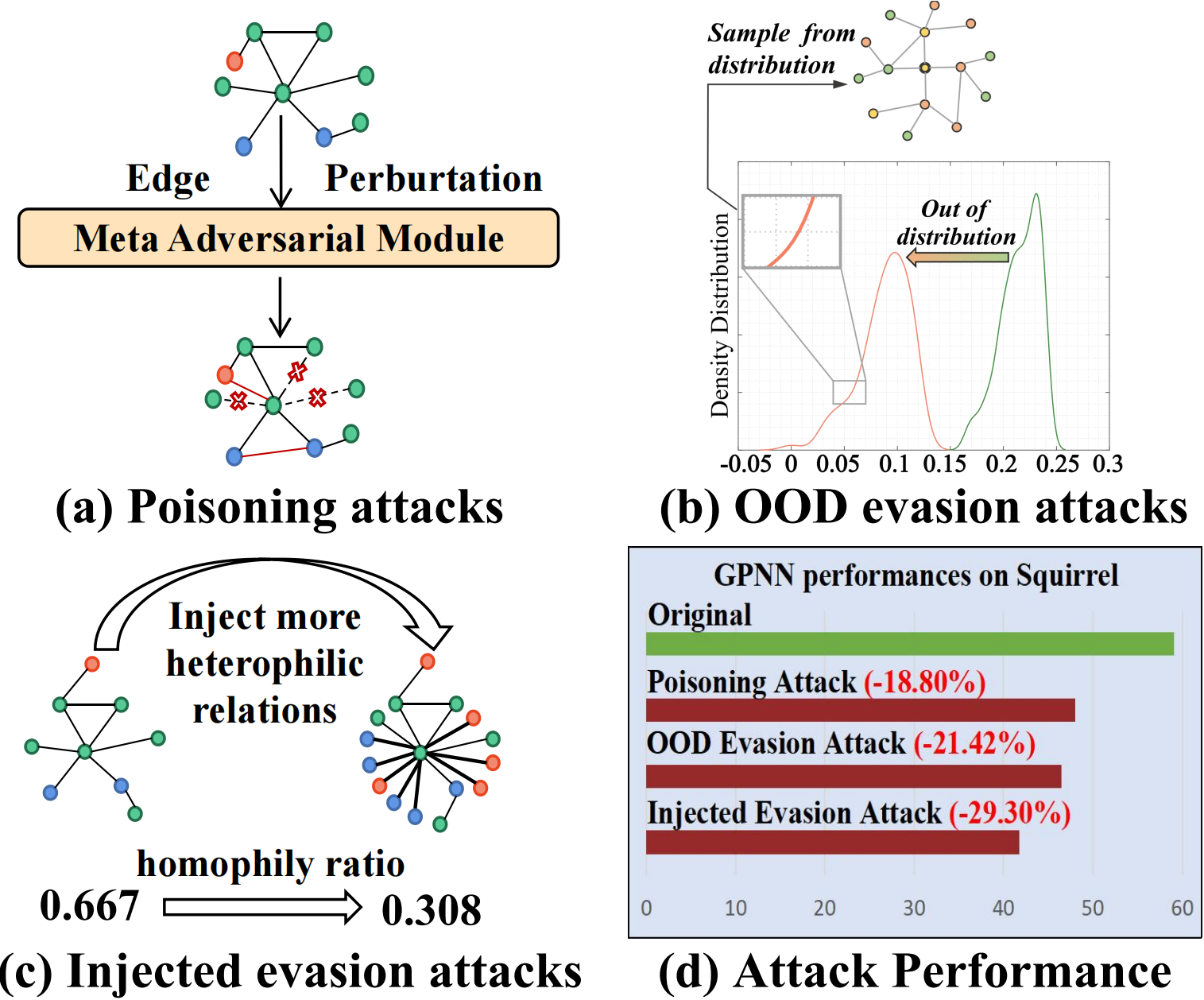}\\
  \vspace{-2mm}
  \caption{Illustration for the vulnerability of existing GPNN under various threats on Squirrel, including a poisoning attack, and another two adapted from evasion attacks. The subfigures (a), (b), and (c) depict how we generate the above three attacks, and (d) reports the significant performance degradation under each attack.}\label{fig:intro-attack}
  \vspace{-1em}
\end{figure}
Graph-structured spatial data, such as social networks~\cite{qiu2022vgaer} and molecular graphs, is ubiquitous in numerous real-world applications~\cite{li2022fast}. Graph convolution networks (GCNs)~\cite{kipf2017semisupervised}, following a neighborhood aggregation scheme, are well-suited to handle these relational and non-Euclidean graph structures, and have been widely applied in various graph tasks, including node classification and recommender systems.
Recently, there has been a surge in GCN approaches for challenging heterophilic graphs~\cite{zhu2020beyond}, where most neighboring nodes have different labels or features. These methods can be divided into two categories: 1) Multi-hop-based approaches~\cite{abu2019mixhop,jin2021universal,TDGNN}; 2) Ranking-based approaches~\cite{liu2021non,yuan2021node2seq,yang2022graph}. The former group learns node representations based on multi-hop aggregations, while the latter performs selective node aggregations by a sorting mechanism. These GCN methods continue to advance the state-of-the-art performance for node classification and have enabled various downstream applications~\cite{pmlr-v139-lin21d,qiu20233d}.

\begin{figure*}[!t]
  \centering

\includegraphics[scale=0.44]{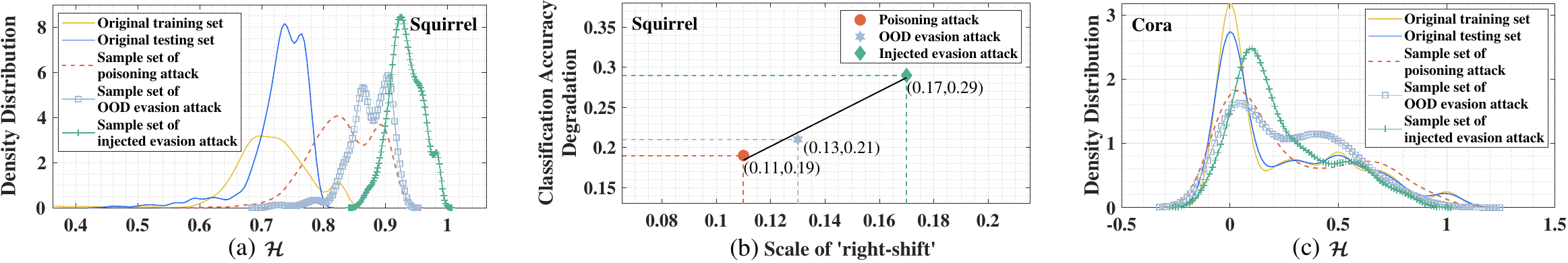}\\
  \vspace{-12pt}
  \caption{Illustration of ${\mathcal {H}}$ distributions and the ``right-shift''. (a) ${\mathcal {H}}$ distributions of various data over Squirrel, including crafted sample sets of three attacks.
  (b) correlation between the ``right-shift'' of ${\mathcal {H}}$ distributions and node classification degradation.
  (c) ${\mathcal {H}}$ distributions of various data, including the original train and test set of a homophilic dataset Cora, and crafted sample sets of three attacks.}\label{fig:intro-analyze}
  \vspace{-1em}
\end{figure*}


Despite the significant success of the current GCN methods on heterophilic graphs, these approaches are \textbf{extremely vulnerable to malicious threats} that aim to distort the structure of the target graph during testing. We conduct experiments to attack the state-of-the-art GPNN~\cite{yang2022graph} method, which was trained on the popular Squirrel~\cite{Pei2020Geom-GCN:} benchmark for heterophilic graphs, using samples created by various attacks. Fig.\ref{fig:intro-attack} demonstrates that the accuracy of node classification can be greatly reduced under three different types of destructive attacks, including a well-known poisoning attack~\cite{jin2020graph}, and two attacks adapted from evasion attacks ~\cite{biggio2013evasion,Adversarial_Feature_Selection_against_Evasion_Attacks}.  Specifically, as shown in Fig.\ref{fig:intro-attack} (a), the poisoning attack produces adversarial structural perturbations to the edges of the graph, fooling GPNN to make incorrect predictions. The proposed two evasion-based attacks are referred to as ``OOD evasion attacks'' and ``injected evasion attacks'', respectively. Fig.\ref{fig:intro-attack} (b) and Fig.\ref{fig:intro-attack} (c) demonstrate how the sample sets for these two attacks are created. The first generates a graph with a node distribution that is vastly different from that of the target testing set, while the second manipulates the target graph by injecting more heterophilic edges. Under these three attacks, Fig.\ref{fig:intro-attack} (d) shows that  classification accuracy of GPNN is sharply decreased by $18.90\%$, $21.42\%$, and $29.30\%$, respectively.



To analyze the reasons why GCN methods are fragile on heterophilic graphs, we further depict the ${\mathcal {H}}$ distributions~\cite{zheng2022graph} of the crafted data from the aforementioned three attacks, as well as the distributions of the original train and test sets of Squirrel in Fig.\ref{fig:intro-analyze} (a). Here $ {\mathcal {H}}$ represents the node-level heterophily, which is the proportion of a node's neighbors that have a different class\footnote{A more formal definition is given in Preliminaries Section, and higher $ {\mathcal {H}}$ values indicate a node with strong heterophily.}.
Fig.\ref{fig:intro-analyze} (a) demonstrates that the distributions of three attack samples are all located to the right of the training set, with the most destructive sample for the GPNN method being the furthest to the right. This observation led us to investigate the correlation between the ``right-shift'' of the ${\mathcal {H}}$ distribution relative to the train set and the vulnerability of GCN approaches. This correlation is visualized in Fig.\ref{fig:intro-analyze} (b) and it is shown that the scale of ``right-shift'' is strongly proportional to the degradation of node classification performance. We refer to this phenomenon as ``\textbf{structural out-of-distribution (OOD)}'' in GCN methods for graphs of spatial data.

To investigate the underlying cause of the aforementioned structural OOD, we attacked another GPNN model that was trained on the homophilic graph Cora~\cite{yang2016revisiting} and depicted the resulting ${\mathcal {H}}$ distributions in Fig.\ref{fig:intro-analyze} (c). Interestingly, the shifts of the three attacks relative to the training set of Cora are very small. This minor ``right-shift'' enables the GPNN model trained on Cora to be more robust. We attribute this to the strong homophily present in the Cora dataset and believe that more homophily will result in less ``right-shift'' under attacks, even for heterophilic graphs, and hence alleviate the structural OOD.

In light of the above discussion, a critical question arises: ``\textit{How can a GCN model automatically learn an appropriate \textbf{homophilic structure} over \textbf{heterophilic graphs} to reduce
the scale of ``right-shift'' in ${\mathcal {H}}$ distributions?}'' This could help to make the model more resistant to malicious attacks on heterophilic graphs.
Achieving this goal is challenging. Despite the success of many structure learning-related methods~\cite{jin2021node,jin2021universal,he2022block}, they also tend to strengthen the heterophily or only focus on the \emph{local} relations between two nodes rather than considering the \emph{global} connections. These methods still suffer from vulnerability issues under attacks (as seen in Figure \ref{fig:meta} and Table \ref{tab:exp-ood-injected}), and they are hardly able to address the challenge.



We address the above challenging question with a novel method called LHS. The key components of the proposed LHS are: 1) a self-expressive generator that automatically induces a latent homophilic structure over heterophilic graphs via multi-node interactions, and 2) a dual-view contrastive learner that refines the latent structure in a self-supervised manner. LHS iteratively refines this latent structure during the learning process, enabling the model to aggregate information in a homophilic way on heterophilic graphs, thereby reducing the ``right-shift'' and increasing robustness. It should be noted that the original graph
Experiments on five benchmarks of heterophilic graphs show the superiority of our method. We also verify the effectiveness of our LHS on three public homophilic graphs. Additionally, the induced structure can also be applied to other graph tasks such as clustering. Our contributions are as follows:
\vspace{-4pt}
\begin{itemize}
\item We quantitatively analyze the
robustness of GCN methods over omnipresent
heterophilic graphs for node classification, and reveal that the ``right-shift'' of ${\mathcal {H}}_{node}$ distributions is highly proportional to the model's vulnerability, i.e., the structural OOD. To the best of our knowledge, this is the first study in this field.
\item We present LHS, a novel method that strengthens GCN against various attacks by learning latent homophilic structures on heterophilic graphs.
\item We conduct extensive experiments on various spatial datasets to show the effectiveness of the proposed LHS in mitigating the structural OOD issue.
\end{itemize}
\vspace{-10pt}

\section{Related Work}
\subsection{Graph Convolution Networks}
There is a line of early studies in graph convolution networks (GCNs) \cite{kipf2016variational,kipf2017semisupervised,GraphSAGE,veličković2018graph}. Recent GCN approaches over heterophilic graphs can be grouped into multi-hop-based ones \cite{abu2019mixhop,zhu2020beyond,jin2021node,TDGNN,wang2022acmp}, ranking-based ones~\cite{liu2021non, wang2022powerful,yang2022graph}, and the ones using GCN architecture refinement~\cite{FAGCN,DMP,WRGNN,yan2021two,ACM, xu2023node,li2023restructuring,zheng2023finding}. These methods have achieved remarkable success in graph node classification. However, robustness is yet to be explicitly considered on challenging heterophilic graphs.

\subsection{Robust Graph Convolution Networks}
Recently we have witnessed a surge in the robustness of GCN on heterophilic graphs. These methods can be categorized into the structure learning-based ones ~\cite{jin2020graph,jin2021universal,he2022block,zhu2022does,liu2023beyond}, and the ones based on adversarial training~\cite{dai2018adversarial,zhu2019robust,zhang2020self,GNN-Guard,AD-GCL}. The most related to our work is ProGNN~\cite{jin2020graph} which explores the low-rank and sparsity of the graph structure, and SimP-GCN~\cite{jin2021node} which relies on a similarity preservation scheme for structure learning. Our work differs from the above methods in two aspects: 1) We focus on the structural OOD issue of GCN approaches over heterophilic graphs. To the best of our knowledge, this problem is largely ignored in previous works. 2) We iteratively refine the latent structure of heterophilic graphs by a novel self-expressive method and a dual-view contrastive learning scheme, enabling a GCN model
to effectively aggregate information in a homophilic way on heterophilic graphs.

\begin{figure*}[!t]
  \centering
  \includegraphics[width=17cm,height=5.5cm]{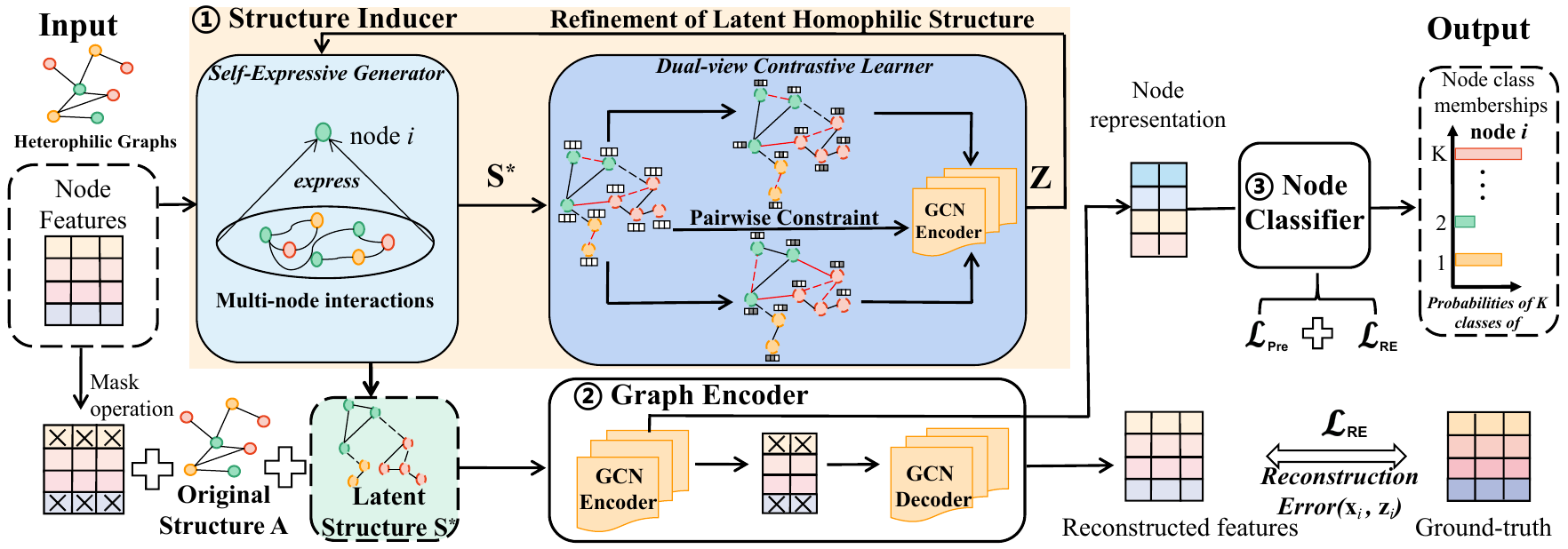}\\
  \vspace{-10pt}
  \caption{The architecture of the proposed LHS, which consists of three modules: structure inducer, graph encoder, and node classifier. The key ingredient structure inducer involves two components, i.e., the self-expressive generator and dual-view contrastive. The former learns a latent homophilic structure by multi-node interactions and then the latter refines the structure. We iteratively perform such a procedure to learn a better homophilic structure on heterophilic graphs. The refined structure will be fed to the graph encoder for representation aggregation, and finally, the classifier for node classification.}\label{fig:model}
  \vspace{-15pt}
\end{figure*}

\section{Preliminaries}
We denote the graph as $G=(V,E,\mathbf {X})$, where $V \in \mathbb{R}^{N \times N}$ is the set of nodes, $E$ is the set of edges between nodes, $\mathbf{X}$ is the node features and $(V,E)$ can form the original network structure $\mathbf A$. We aim to generate a latent homophilic structure for robust GCN in node classification. For convenience, we give the following edge definition:

\newtheorem{definition}{Definition}
\begin{definition}
(Positive/Negative Edge)
A \textbf{positive} edge indicates that the two nodes in the link have the same type, while \textbf{negative} one refers to a link that connects two nodes with different types.
\end{definition}

\noindent
\textbf{Node-level Heterophily:}
\label{sec:peri-heter}
We use $ {\mathcal {H}}$ to represent the node-level heterophily, which is the proportion of a node's neighbors that have a different class. We refer to ~\cite{zheng2022graph} to give a formal definition, and it is a fine-grained metric to measure the edge heterophily in a graph.

\newtheorem{remark2}{Remark}

\begin{definition}
	\label{Def2}
Node-level Heterophily ($\mathcal {H}$):
\begin{equation}
\mathcal {H}(v_i)=\frac{|(v_i,v_j)\mid y(v_i)\neq y(v_j)|} {|\mathcal {E}(v_i)|}, \forall{v_i, v_j} \in V,
\end{equation}
\end{definition}
where $(v_i,v_j) \in \mathcal {E}(v_i)$, $\mathcal E (v_i)$ is the edge set of $v_i$, $y(v_i)$ is the node class of $v_i$, and $|\cdot|$ represents the number of edges.
The nodes with strong heterophily have higher $\mathcal {H}$ (closer to 1), whereas nodes with strong homophily have smaller $\mathcal {H}$ (closer to 0). It also provides an edge distribution sampling set for quantitative analysis over heterophilic graphs.

\noindent\textbf{OOD Formulation:} We rigorously formulate the ego-graph edge distribution by utilizing the proposed node-level heterophily, and the formulation further enables the multi-layer edge distribution analyses. The `right-shift' phenomenon found in the heterophilic graphs also motivates us to propose latent homophilic structure refinement. Theoretical analysis from a spectral-domain view is given in Appendix 1~\footnote{Appendices are available in the preprint version.} to further elaborate the rationale of the proposed LHS.

\noindent\textbf{Edge Distribution Formulation:} Given a random node $v_i \in V$, we define $v_i$'s $k$-hop neighbors as $N_{v_i}(k)$ (where $k$ is an arbitrary positive integer) and the nodes in $N_{v_i}(k)$ form an ego-graph substructure called $A_{v_i}(k)$, which consists of a local adjacency matrix represented as $A_{v_i}(k)=\left\{a_{v_i u} \mid u \in N_{v_i}(k)\right\}$. In this way, we can study the distribution of the $k$-hop substructure $A_{v}$ via $p( {\mathcal {H}} \mid A_{v_i}(k))=p( {\mathcal {H}} \mid A_{v_i}(1)A_{v_i}(2)... A_{v_i}(k))$. It's worth noting that the ego-graph can be seen as a Markov blanket for the centered node $v_i$, meaning that the conditional distribution $p( {\mathcal { H}} \mid A_{v_i}(k))$ can be decomposed as a product of independent and identical marginal distributions $p( {\mathcal {H}} \mid A_{v_i}(i), i \in k)$ for each of the ${A_{v_i}(j), j \leq k}$. We also provide more empirical observations about the ``right-shift'' phenomenons on heterophilic graphs, which are available in Appendix 3.

\vspace{-5pt}
\section{Methodology}
\subsection{Overview}
In this section, we present the proposed LHS. Our goal is to learn an appropriate latent homophilic structure from heterophilic graphs, so as to reduce the scale of ``right-shift'' in $\mathcal{H}$ distributions. Inspired by the analysis in the Introduction Section that more homophily of a graph can reduce the ``right-shift'', our latent structure tends to encourage positive edge connections by increasing the edge weights for pairs of nodes with the same type, and suppresses negative edge connections by reducing the edge weight for nodes with different types. Fig.~\ref{fig:model} shows the architecture of LHS.

\subsection{Structure Inducer}
The proposed structure inducer involves a self-expressive generator and a dual-view contrastive learner.

\subsubsection{Self-Expressive Generator.}

Our proposed self-expressive generator produces a latent homophilic structure over heterophilic graphs in three steps:

\noindent
\textbf{Step 1: Capturing multi-node information.} Given the node features $\mathbf X$, we aim to capture the multi-node feature information by expressing one node feature via a linear or affine combination of other node features. Differently from the existing structure learning method with pair-wise similarity matrix~\cite{jin2021universal}, our inducer can generate fine-grained latent structure $S^* \in \mathbb{R}^{N \times N}$ by discovering the global information in low-dimension subspace. Specifically, for $\forall v_i \in V$, we express it by a linear sum of multi-node features $\mathbf x_{j}, v_j \neq v_i$, which can be expressed as $\mathbf x_{i}=\sum_{v_j \in V} q_{i j} \mathbf x_{j}$, where $q_{i j}$ is the $(i, j)$ th element of a coefficient matrix $Q$.


\noindent
\textbf{Step 2: Optimizing the generator loss.} We use the coefficient matrix $Q$ to generate latent structure. The optimization problem to solve $Q$ can be formulated as follows:
\begin{equation}\label{self}
\min _{Q}\|Q\|_{F} \quad s.t. \quad \mathbf X=Q \mathbf X ; \ \rm {diag}(Q)=0
\vspace{-5pt}
\end{equation}
where $\|Q\|_{F}$ is the Frobenius matrix norm~\cite{bottcher2008frobenius} of $Q$ and $\rm {diag}(Q)$ denotes the diagonal entries of $Q$. Eq.~\ref{self} optimizes a block-diagonal matrix $Q$ to generate the latent structure $S^*$. Each block of $Q$ contains the nodes which belong to the same class, thus mitigating the ``right-shift" phenomenon. We relax the hard constraint $\mathbf X=Q \mathbf X$ with a soft constraint $(\mathbf X-Q \mathbf X)$, as the exact reconstruction of $\mathbf X$ may be impractical. The relaxation formulation is:
\begin{equation}
\min _{Q} \mathcal{L}_{S E}=\|\mathbf X-Q \mathbf X\|_{F}^{2}+\lambda_{1}\|Q\|_{F}^{2},  s.t. \rm{diag}(Q)=0,
\vspace{-5pt}
\end{equation}
where $\lambda_{1}$ is a weight hyperparameter of optimization.

\noindent
\textbf{Step 3: Generating latent homophilic structure.}
We construct the latent homophilic structure $S^*$ by $Q+Q^{T}$, while this structure still has noise and outliers. Therefore, we rely on Algorithm~\ref{algo} to generate $S^*$. Specifically, the SVD decomposition in Algorithm~\ref{algo} aims to filter noisy information during the structure generation. In each iteration, we refine the latent structure $S^*$. We employ the scalable randomized SVD~\cite{halko2011finding} to improve the computation efficiency for large-scale graphs. Details are available in Appendix 2.1.
\vspace{-10pt}
\begin{algorithm}
  \renewcommand{\algorithmicrequire}{\textbf{Input:}}
  \renewcommand{\algorithmicensure}{\textbf{Output:}}
  \caption{The generation algorithm of $S^*$}
  \label{alg1}
  \begin{algorithmic}[1]
 \REQUIRE Coefficient matrix $Q$, subspaces dimension $K=4$, rank $r=Kd+1$, where $d$ is the number of node classes.
 \ENSURE  Latent structure $\mathbf S^*$.
\STATE \textbf{Initialization}: $Q^{\prime}=\frac{1}{2}\left(Q+Q^{T}\right)$.
\STATE \textbf{Compute}: the $r$ rank \textbf{SVD} of $Q^{\prime}$ via $Q^{\prime}=U \Sigma V^{T}$.
\STATE \textbf{Compute}: $L=U \Sigma^{\frac{1}{2}}$ and  normalize each row of $L$.
\STATE \textbf{Update}: $L^{\prime}$ $\leftarrow$ set the negative values in $L$ to zero.
\STATE \textbf{Obtain}: $S^*=\left(L^{\prime}+\right.$ $\left.L^{\prime T}\right) /\|L\|_{\infty}$, where $s_{i j} \in[0,1]$.
\end{algorithmic}\label{algo}
\end{algorithm}

\vspace{-5pt}
\subsubsection{Dual-view Contrastive Learner.}
So far we have obtained the latent structure $S^*$, and in the previous step, we focus on learning $S^*$ based on the node features. To refine such a structure, we further explore the enriched structural information of the graph and propose a novel dual-view contrastive learner. We take four steps for such a refinement.
\noindent
\textbf{Step 1: Generating the dual views of latent structure.}
We denote the graph as $G=(S^*, \mathbf X)$, where $S^*$ is the learnable latent homophilic structure. Based on $G$, we generate two graphs $G_{1}$ and $G_{2}$ via the corruption function~\cite{velickovic2019deep} to refine the structure in a self-supervised manner. Specifically, the corruption function randomly removes a small portion of edges from $S^*$ and also randomly masks a fraction of dimensions with zeros in node features $\mathbf X$.

\noindent
\textbf{Step 2: Aggregating information on latent structure.} For efficient aggregation on $S^*$, we devise a truncated threshold GCN to control the sparsity of the structure. For $S^*$, we introduce a threshold $\sigma$ to decide if there exists a soft connection with continuous values between two nodes and then form a new structure $S^*_{\sigma}$. Such a way is quite different from the previous hard-coding operations~\cite{liu2022towards} that only have $0$ or $1$, and our $S^*_{\sigma}$ can be flexibly applied to various benchmarks. We employ the truncated threshold GCN on three graphs, including $G$, $G_1$, and $G_2$.
The proposed truncated threshold GCN on graph $G$ can generate the representations as follows:




\vspace{-10pt}
\begin{equation}
\begin{aligned}
S^*&=\{s^*_{ij} \geq \sigma \mid s^*_{ij} \in S^*\} \\
 Z=\operatorname {GCN}(\mathbf X, S^*)&=\hat{S}^* \rm{ReLU}(\hat{S}^* \mathbf X W^{(0)}) W^{(1)}
\end{aligned}
\end{equation}
where $\operatorname{ReLU}$ is an activation function, $W^{(0)}$ and $W^{(1)}$ are the trainable weight matrices, $\tilde{S}^*=S^*+I$, $I \in \mathbb{R}^{|V| \times|V|}$ is the identity matrix and the degree diagonal matrix $\tilde{D}_{i i}$ with $\tilde{D}_{i i}=\sum_{j \in V} \tilde{S}^*_{i j}, \forall i \in V .$ We set $\hat{S}^*=\tilde{D}^{-\frac{1}{2}} \tilde{S}^* \tilde{D}^{-\frac{1}{2}}$. $W^{(0)}$ and $W^{(1)}$ are trainable parameter matrices of GCN.
$Z_{1}$ and $Z_{2}$ denote the node embedding matrices for the two views $G_{1}$ and $G_{2}$, and these node embeddings are generated from the proposed $\mathrm{GCN}$ encoder.

\noindent
\textbf{Step 3: Sampling the contrastive samples.} For a node $v_i \in V$, let us denote the corresponding nodes in $G_{1}$ and $G_{2}$ as $G_{1}(v_i)$ and $G_{2}(v_i)$ respectively. Then we introduce the node-pair sampling rules for the contrastive learning as follows: a) \textbf{ positive example} is the node pair from the same node of different graph views, that is, $\forall i \in V$, the pair $\left(G_{1}(i), G_{2}(i)\right)$. b) \textbf{negative example} is the node pair from the different nodes of the same or different graph views, that is, $\forall i \in V$, $V_{-i}=\{j \in V \mid j \neq i\}$. Both $\left(G_{1}(i), G_{1}(j)\right)$ and $\left(G_{1}(i), G_{2}(j)\right)$ are the negative examples.

\noindent
\textbf{Step 4: Optimizing the contrastive loss.} In addition to the above dual-view optimization, we also propose a novel pairwise constraint to optimize the original graph view, which can further improve the quality of the learned homophilic structure. Specifically, we sample the node pairs from the training set with labels. The same-class node pairs are positive samples noted as $(u,v)$, while the different-classes node pairs are negative samples noted as $(u,v_n)$, where $u, v$, and $v_n$ belong to the training node set and $y(u)=y(v)$, $y(u)\neq y(v_n)$. Here $y(\cdot)$ is the node label. We formally propose the loss function as:
\vspace{-6pt}
\begin{equation}
  \begin{aligned}
  & \mathcal{L}_{refine}=\sum_{i \in V}\left[-\frac{\cos \left(z_{1 i}, z_{2 i}\right)}{\tau}\right. \\
  &\left.+\log \left(\sum_{j \in V_{-i}} e^{\frac{\cos \left(z_{1 i}, z_{1 j}\right)}{\tau}}+e^{\frac{\cos \left(z_{1 i}, z_{2 j}\right)}{\tau}}\right)\right] \\
  &-\lambda_2 [\log \left(\sigma\left({z}_u^{\top} {z}_v\right)\right)- \log \left(\sigma\left(-{z}_u^{\top} {z}_{v_n}\right)\right)]
  \end{aligned}\label{dual}
\end{equation}
where $z_{1 i}$ and $z_{2 i}$ denote embeddings node $i$ on $Z_{1}$ and $Z_{2}$ respectively, $z_{(v)}$ denotes embedding of node $v$ on $Z$, $\cos (\cdot)$ is the cosine similarity between the two embeddings, $\tau$ is a temperature parameter, and $\lambda_2$ is a  hyperparameter of the second term. The first term of Eq.~\ref{dual} encourages the consistent information between positive samples, while the second term penalizes the inconsistent information between dual views. The last term of Eq.~\ref{dual} makes sure that the same-class nodes have more similar representations.
\subsubsection{Structure Refinement.}
The node embedding matrix $\mathbf{Z}$ generated by Eq.~\ref{dual} has incorporated the refined structure and node features. Finally, we feed $\mathbf{Z}$ into the structure inducer again to iteratively refine the structure $S^*$. Equipped with both the original graph $A$ and the refined one $S^*$, we use a structure bootstrapping mechanism $S^{*} \leftarrow \zeta A+(1-\zeta)S^{*}$ to update $S^*$ with a slow-moving of $A$, where $\zeta$ is a hyperparameter to balance the information between $A$ and $S^{*}$. Specifically, the input graphs with high heterophily will lead to smaller $\zeta$, while the ones with high homophily have larger $\zeta$. By doing so, we can reduce the scale of ``right-shift'' over heterophilic graphs and thus potentially mitigate the structural OOD issue under malicious attacks as discussed in Fig.~\ref{fig:intro-attack} in the Introduction Section.

\vspace{-5pt}
\subsection{Graph Encoder}
Our graph encoder consists of a GCN encoder and a GCN decoder, where the former encodes the masked features, and the latter aims to generate the reconstructed features $\hat{\mathbf {X}}$. We feed the masked node features $\mathbf {\tilde {X}}$ and $S^*$ to the graph encoder.  Then we use a scaled cosine error to optimize the encoder as follows:



\vspace{-4pt}
\begin{equation}\label{MAE}
\mathcal{L}_{\mathrm{Re}}=\frac{1}{|\widetilde{\mathcal{V}}|} \sum_{v_{i} \in \widetilde{\mathcal{V}}}\left(1-\frac{\mathbf {x}_{i}^{T} \hat{\mathbf {x}}_{i}}{\left\|\mathbf {x}_{i}\right\| \cdot\left\|\hat{\mathbf {x}}_{i}\right\|}\right)^{\gamma}, \gamma \geq 1
\end{equation}
where $\mathbf x_i$ and $\mathbf {\hat x}_i$ are the feature and reconstructed feature of node $i$, $\gamma$ is a scale factor.
\begin{figure*}[t]
  \centering
  \includegraphics[scale=0.395]{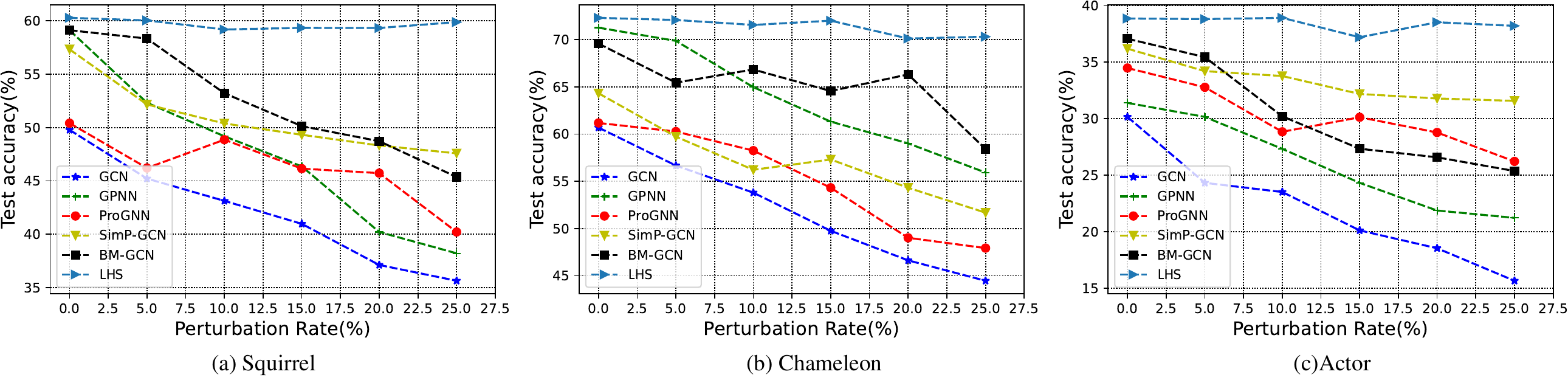}\\
    \vspace{-10pt}
  \caption{Comparisons of node classification under a poisoning attack. We repeat three times and report the mean values.}\label{fig:meta}
\end{figure*}

\begin{table*}[t]
  \centering
  \vspace{-8pt}
  \caption{Robustness comparisons in terms of classification accuracy(\%) under 2 evasion-based attacks OOD and Injected.}
   \resizebox{\linewidth}{!}{
    \begin{tabular}{c|ccccccccccr}
    \hline
          & \multicolumn{2}{c}{Wisconsin} & \multicolumn{2}{c}{Texas} & \multicolumn{2}{c}{Chameleon} & \multicolumn{2}{c}{Squirrel} & \multicolumn{2}{c}{Actor} &  \\
          & OOD   & Injected & OOD   & Injected & OOD   & Injected & OOD   & Injected & OOD   & Injected & \multicolumn{1}{c}{\textit{average}} \\
    \hline
 H2GCN & $48.24$$\pm$\scriptsize$2.1$ & $41.53$$\pm$\scriptsize$1.7$ & $45.33$$\pm$\scriptsize$0.9$ & $39.86$$\pm$\scriptsize$1.1$ & $47.20$$\pm$\scriptsize$1.5$  & $37.21$$\pm$\scriptsize$2.3$ & $48.27$$\pm$\scriptsize$3.1$ & $36.34$$\pm$\scriptsize$1.6$ & $21.33$$\pm$\scriptsize$2.3$ & $14.17$$\pm$\scriptsize$1.4$ & $37.97$ \\
   GPNN  & $52.78$$\pm$\scriptsize$0.8$ & $40.21$$\pm$\scriptsize$1.4$ & $56.33$$\pm$\scriptsize$1.1$ & $38.78$$\pm$\scriptsize$1.9$ & $54.62$$\pm$\scriptsize$2.0$ & $48.49$$\pm$\scriptsize$1.6$ & $50.23$$\pm$\scriptsize$0.6$ & $40.55$$\pm$\scriptsize$1.3$ & $20.83$$\pm$\scriptsize$2.7$ & $16.28$$\pm$\scriptsize$1.5$ & $41.91$ \\
    \hline
   UGCN & $72.37$$\pm$\scriptsize$2.7$ & $44.58$$\pm$\scriptsize$2.2$ & 50.40$\pm$\scriptsize$1.5$  & $41.92$$\pm$\scriptsize$0.7$ & $57.23$$\pm$\scriptsize$3.1$ & $40.39$$\pm$\scriptsize$2.8$ & $52.45$$\pm$\scriptsize$3.3$ & $41.79$$\pm$\scriptsize$3.2$ & $23.37$$\pm$\scriptsize$1.9$ & $15.57$$\pm$\scriptsize$0.8$ & $44.00$ \\
    SimP-GCN & $73.34$$\pm$\scriptsize$2.1$ & $61.43$$\pm$\scriptsize$2.4$ & $59.76$$\pm$\scriptsize$3.3$ & $60.31$$\pm$\scriptsize$0.4$ & $61.28$$\pm$\scriptsize$1.6$ & $54.57$$\pm$\scriptsize$2.6$ & $54.34$$\pm$\scriptsize$1.0$ & $54.01$$\pm$\scriptsize$2.3$ & $28.96$$\pm$\scriptsize$1.8$ & $24.31$$\pm$\scriptsize$2.9$ & $53.23$ \\
    BM-GCN &  $76.58$$\pm$\scriptsize$0.5$ & $69.78$$\pm$\scriptsize$1.3$ & $62.17$$\pm$\scriptsize$1.1$ & $63.22$$\pm$\scriptsize$1.0$ & $62.37$$\pm$\scriptsize$2.6$ & $52.58$$\pm$\scriptsize$2.3$ & $57.30$$\pm$\scriptsize$1.0$  & $59.82$$\pm$\scriptsize$0.7$ & $26.17$$\pm$\scriptsize$1.4$ & $18.92$$\pm$\scriptsize$1.9$ & $54.89$ \\
    \hline
    LHS & \textbf{82.31}$\pm$\scriptsize$0.5$ & \textbf{73.34}$\pm$\scriptsize$1.1$ & \textbf{71.15}$\pm$\scriptsize$1.4$ & \textbf{68.19}$\pm$\scriptsize$1.6$ & \textbf{67.33}$\pm$\scriptsize$0.9$ & \textbf{60.10}$\pm$\scriptsize$2.2$  & \textbf{68.76}$\pm$\scriptsize$3.1$ & \textbf{62.13}$\pm$\scriptsize$2.7$ & \textbf{32.23}$\pm$\scriptsize$2.6$&
    \textbf{33.42}$\pm$\scriptsize$1.8$ & \textbf{62.41} \\
   \hline
    \end{tabular}}%
  \label{tab:exp-ood-injected}%
  \vspace{-5pt}
\end{table*}%

\subsection{Classifier and Loss Functions}
Finally, our classifier outputs predictions. We generate classification representations via a fully-connected layer $F(\cdot)$, that is $y_{\text {pred}}=\operatorname{softmax}(F (\mathbf h_i ))$.
Then the loss of the classifier can be expressed as $\mathcal L_{Pre}=\sum_{i=1}^{N_{l}} y_{i} \log y_{\text {pred } i} $. We jointly train the graph encoder and classifier with $\mathcal L$, which can be expressed as:
\vspace{-4pt}
\begin{equation}
\mathcal{L}=\mathcal{L}_{Pre}+\beta \mathcal{L}_{Re}
\end{equation}
where $\beta$ is a hyperparameter of loss weight.

\section{Experiments}
\subsection{Datasets, Baselines and Settings}
\paragraph{Datasets:} We experiment on nine benchmarks. For six heterographic spatial datasets including \textbf{Cornell}, \textbf{Texas}, \textbf{Wisconsin}~\cite{Pei2020Geom-GCN:}, \textbf{Chameleon}, \textbf{Squirrel}~\cite{rozemberczki2021multi}, nodes are web pages and edges are hyperlinks between these pages, and \textbf{Actor}~\cite{tang2009social}, nodes are actors and edges denote co-occurrences on same web pages. For the three homophilic datasets including  \textbf{Cora}, \textbf{Citeseer} and \textbf{PubMed}~\cite{yang2016revisiting}, nodes refer to articles, and edges are the citations between articles. Due to space limitations, we provide detailed descriptions in Appendix 4.1.



\paragraph{Baselines:}
We follow the previous works~\cite{jin2021node,he2022block} to use eleven baselines. We categorize these methods into three groups:
\textbf{1) multi-hop-based approaches} MixHop~\cite{abu2019mixhop} and H2GCN~\cite{zhu2020beyond}, which mix the multi-hop neighbors for aggregation; \textbf{2) ranking-based approaches} NLGNN~\cite{liu2021non}, GEOM-GCN~\cite{Pei2020Geom-GCN:}, Node2Seq~\cite{yuan2021node2seq} and GPNN~\cite{yang2022graph} that aim to search on the network structure and then perform selective aggregation; \textbf{3) structure learning approaches} ProGNN~\cite{jin2020graph}, UGCN~\cite{jin2021universal}, BM-GCN~\cite{he2022block} and GREET~\cite{liu2023beyond} that automatically learn graph structures for aggregations. Specifically, \textbf{ProGNN} preserves the low-rank and sparsity characteristics of the graph structure for robust GCN. \textbf{UGCN} and \textbf{SimP-GCN} employ a similarity preservation scheme for structure learning on heterophilic graphs and \textbf{BM-GCN} employs a selective aggregation on structure via a block-guided strategy. We also compare our model with a recently proposed spectral-based method ALT-GCN~\cite{xu2023node}.



\paragraph{Settings:} We implement our method by Pytorch and Pytorch Geometric and use Adam Optimizer on all datasets with the learning rate as $0.001$. We configure epochs as $1000$ and apply early stopping with the patience of $40$. We configure the hidden size as $64$ and the batch size as $256$. We perform the structure learning for $2$ rounds. More detailed hyperparameters are available in Appendix 4.3.


\vspace{-3pt}
\subsection{Main Results}
\label{sec:robust-exp}


\begin{table*}[t]
    \centering
    \small
    \caption{Comparisons of node classification without any attacks.}
    \vspace{-5pt}
    \resizebox{\linewidth}{!}{
     \begin{tabular}{c|cccccccccc}
      \hline
            & Wisconsin & Texas & Cornell & Chameleon & Squirrel & Actor & Cora  & Citeseer & {Pubmed} & \textit{Average} \\
      \hline
      MixHop & $75.88$$\pm$\scriptsize$4.9$ & $77.84$$\pm$\scriptsize$7.7$ & $73.51$$\pm$\scriptsize$6.2$ & $60.50$$\pm$\scriptsize$2.5$ & $43.80$$\pm$\scriptsize$1.4$ & $32.22$$\pm$\scriptsize$2.3$ & $81.90$$\pm$\scriptsize$0.8$ & $71.40$$\pm$\scriptsize$1.3$ & $80.80$$\pm$\scriptsize$0.6$ & $66.43$ \\
      H2GCN & $86.67$$\pm$\scriptsize$4.6$ & $84.86$$\pm$\scriptsize$6.7$ & $82.16$$\pm$\scriptsize$6.0$ & $57.11$$\pm$\scriptsize$1.6$ & $37.90$$\pm$\scriptsize$2.0$ & $35.86$$\pm$\scriptsize$1.0$ & $87.81$$\pm$\scriptsize$1.3$ & $77.07$$\pm$\scriptsize$1.6$ & $89.59$$\pm$\scriptsize$0.3$ & $71.00$ \\
      \hline
      NLGNN & $87.30$$\pm$\scriptsize$4.3$ & $85.40$$\pm$\scriptsize$3.8$ & $84.90$$\pm$\scriptsize$5.7$ & $70.10$$\pm$\scriptsize$2.9$ & $59.00$$\pm$\scriptsize$1.2$ & $37.90$$\pm$\scriptsize$1.3$ & $88.50$$\pm$\scriptsize$1.8$ & $76.20$$\pm$\scriptsize$1.6$ & $88.20$$\pm$\scriptsize$0.5$ & $75.28$ \\
      GEOM-GCN & $65.10$$\pm$\scriptsize$6.5$ & $67.84$$\pm$\scriptsize$5.8$ & $60.00$$\pm$\scriptsize$6.5$ & $65.81$$\pm$\scriptsize$1.6$ & $45.49$$\pm$\scriptsize$1.3$ & $31.94$$\pm$\scriptsize$1.0$ & $85.65$$\pm$\scriptsize$1.7$ & \textbf{79.41}$\pm$\scriptsize$1.7$ & \textbf{90.49}$\pm$\scriptsize$0.3$ & $65.75$ \\
      Node2Seq & $60.30$$\pm$\scriptsize$7.0$ & $63.70$$\pm$\scriptsize$6.1$ & $58.70$$\pm$\scriptsize$6.8$ & $69.40$$\pm$\scriptsize$1.6$ & $58.80$$\pm$\scriptsize$1.4$ & $31.40$$\pm$\scriptsize$1.0$ & -     & -     & -     & $57.05$ \\
      GPNN  & $86.86$$\pm$\scriptsize$2.6$ & $85.23$$\pm$\scriptsize$6.4$ & $85.14$$\pm$\scriptsize$6.0$ & $71.27$$\pm$\scriptsize$1.8$ & $59.11$$\pm$\scriptsize$1.3$ & $37.08$$\pm$\scriptsize$1.4$ & -     & -     & -     & $70.79$ \\
      \hline
      UGCN  & $69.89$$\pm$\scriptsize$5.2$ & $71.72$$\pm$\scriptsize$6.2$ & $69.77$$\pm$\scriptsize$6.7$ & $54.07$$\pm$\scriptsize$1.7$ & $34.39$$\pm$\scriptsize$1.9$ & -     & $84.00$$\pm$\scriptsize$0.9$ & $74.08$$\pm$\scriptsize$1.2$ & $85.22$$\pm$\scriptsize$0.7$ & $67.89$ \\
      SimP-GCN & $85.49$$\pm$\scriptsize$3.5$ & $81.62$$\pm$\scriptsize$6.5$ & $84.05$$\pm$\scriptsize$5.3$ & -     & -     & $36.20$$\pm$\scriptsize$1.3$ & -     & -     & -     & $71.84$ \\
      BM-GCN & - & $85.13$$\pm$\scriptsize$4.6$ & -     & $69.58$$\pm$\scriptsize$2.9$ & $51.41$$\pm$\scriptsize$1.1$ & -     & $87.99$$\pm$\scriptsize$1.2$ & $76.13$$\pm$\scriptsize$1.9$ & $90.25$$\pm$\scriptsize$0.7$ & - \\
      GREET & $84.90$$\pm$\scriptsize$3.3$ & \textbf{87.00}$\pm$\scriptsize$4.2$ & $85.10$$\pm$\scriptsize$4.9$ & $63.60$$\pm$\scriptsize$1.2$ & $42.30$$\pm$\scriptsize$1.3$ & $36.60$$\pm$\scriptsize$1.2$ & $83.81$$\pm$\scriptsize$0.9$ & $73.08$$\pm$\scriptsize$0.8$ & $80.29$$\pm$\scriptsize$1.0$ & $70.74$ \\
     ALT-GCN & $76.40$$\pm$\scriptsize$3.9$ & $70.90$$\pm$\scriptsize$4.3$ & $73.90$$\pm$\scriptsize$5.1$ & $65.80$$\pm$\scriptsize$0.9$ & $52.40$$\pm$\scriptsize$0.8$ & -     & $81.20$$\pm$\scriptsize$0.5$ & $71.40$$\pm$\scriptsize$0.4$ & $79.10$$\pm$\scriptsize$0.9$ & $71.39$ \\
      \hline
      \textbf{LHS} &\textbf{ 88.32}$\pm$\scriptsize$2.3$ & $86.32$$\pm$\scriptsize$4.5$ & \textbf{85.96}$\pm$\scriptsize$5.1$ & \textbf{72.31}$\pm$\scriptsize$1.6$ & \textbf{60.27}$\pm$\scriptsize$1.2$ & \textbf{38.87}$\pm$\scriptsize$1.0$ & \textbf{88.71}$\pm$\scriptsize$0.7$ & $78.53$$\pm$\scriptsize$1.5$ & $87.65$$\pm$\scriptsize$0.3$ & \textbf{76.34} \\
      \hline
     \end{tabular}}
  \label{tab:accuracy}%
\vspace{-4mm}
\end{table*}%

\subsubsection{Comparisons under Poisoning Attacks.}
We compare the robustness of our LHS with five baseline approaches under a popular poisoning attack ~\cite{jin2020graph} on three benchmarks including Squirrel, Chameleon, and Actor.
Under various perturbation rates ranging from $0$ to $25\%$, Figure \ref{fig:meta} shows that our LHS consistently performs best among all baselines. For example, ours yields higher classification accuracy of up to $20$ points compared to ProGNN. These results confirm the superiority of our latent structure learning scheme against poisoning attacks. The existing structure learning methods, including BM-GCN, SimP-GCN, and ProGNN, are also extremely vulnerable under large positioning perturbation rates. Nevertheless, they are better than the other two, showing the promise of structure learning over heterophilic graphs. We also observe that the positioning perturbations, which can significantly degrade the baselines at a large rate (i.e., $25\%$), have a very slight impact on our method. We attribute such gains to the latent structure that can be resistant to the structural OOD issue discussed in the Introduction Section, which will also be illustrated in the first question of the Discussion Section.




\subsubsection{Comparisons under Evasion Attacks.}
\begin{figure}[t]
  \centering
  \includegraphics[width=5.8cm]{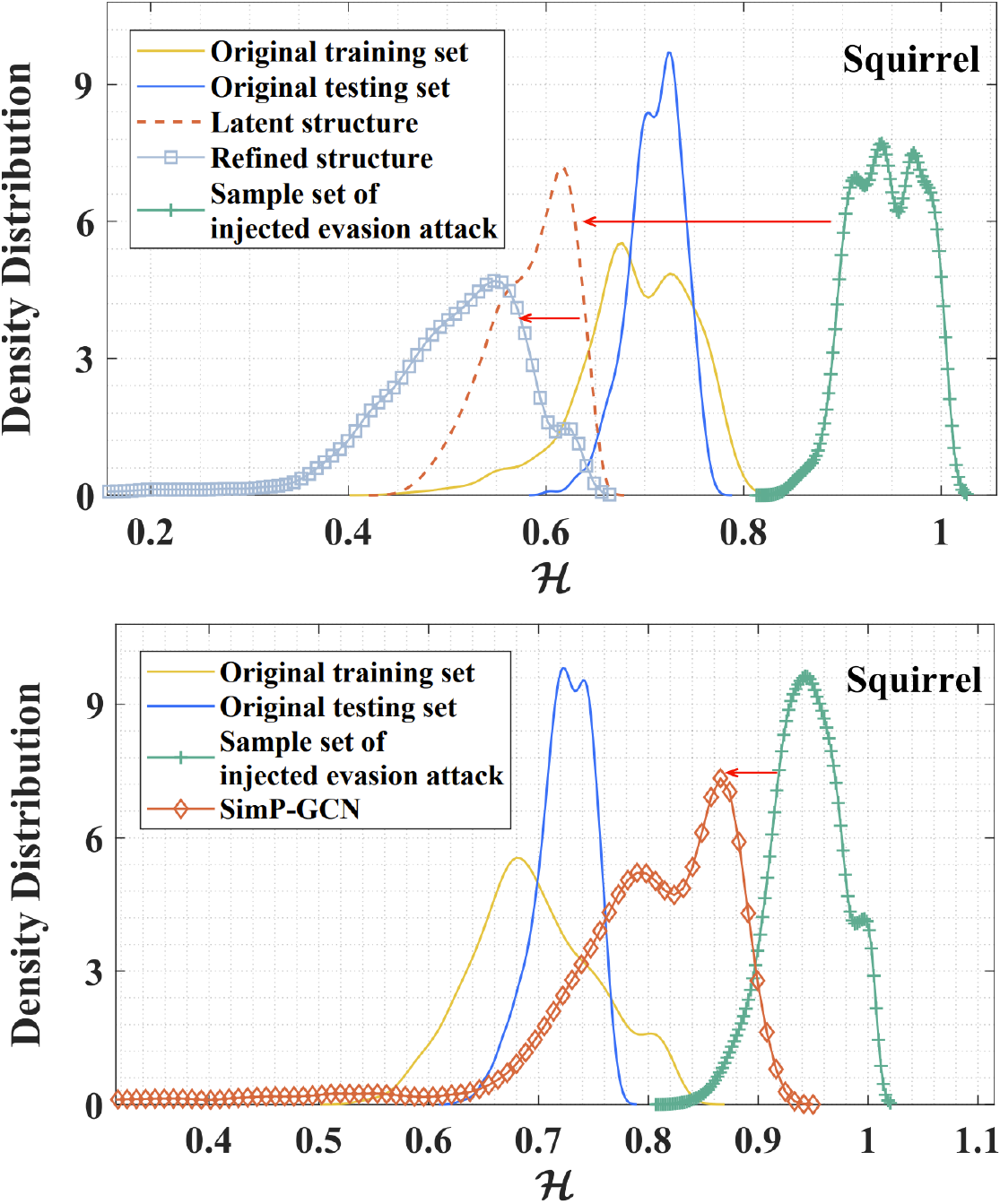}\\
  \vspace{-4mm}
  \caption{Comparisons of the ``right-shift'' of $\mathcal H_{E}$. }\label{fig:dis-shift}
  \vspace{-6mm}
\end{figure}

We presented two evasion-based attacks \cite{Adversarial_Feature_Selection_against_Evasion_Attacks}, i.e., ``OOD evasion attack (OOD)'' and ``injected evasion attack (Injected)'' in Fig.~\ref{fig:intro-attack} (b) and Fig.~\ref{fig:intro-attack}(c), to craft attack samples with destructive structural perturbations to the edges of the graph. Here we compare our method with five baselines on five heterophilic graphs, and report the results in Table \ref{tab:exp-ood-injected}. We chose these five baselines because they are representative of different types of GCN and have been widely used in previous studies. For two nodes with different classes, the ``Injected'' attacks manipulate to inject a connection with a $0.9$ probability.
We repeat our experiments three times and report the mean and variance values in Table \ref{tab:exp-ood-injected}. Under the two attacks, Table \ref{tab:exp-ood-injected} shows that our method consistently achieves the best among all baselines on five benchmarks. Compared with ``OOD'' attacks, ``Injected'' attacks are much more constructive as they significantly increase of heterophily of the testing set. Compared to the state-of-the-art structure learning method BM-GCN, our LHS achieves an $11.33$ point accuracy under ``OOD'' attacks. Overall, these results suggest that LHS is more robust against both attacks compared to all the considered baselines. The key to these improved results is the ability of our LHS to perform global searches of the homophilic structure learned by the structure inducer. 
\begin{figure*}[t]

  \centering
  \subfigure[visualization of a local structure on squirrel]{
  \label{fig:vis:ratio}
  \begin{minipage}[t]{0.4\linewidth}
  \centering
  \includegraphics[scale=0.18]{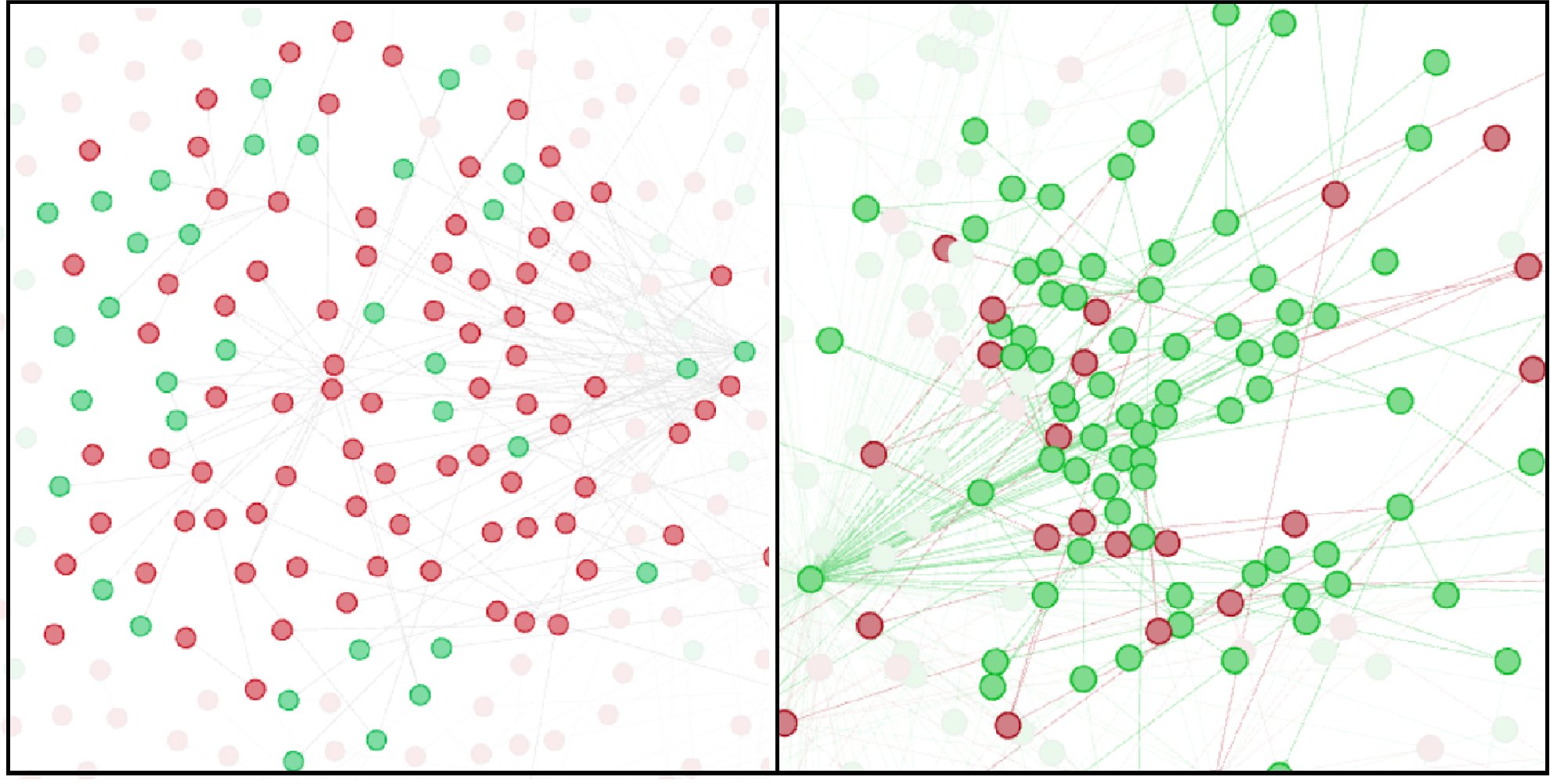}
  \end{minipage}%
  }%
  \subfigure[visualization of a node connection on squirrel]{
  \label{fig:vis:case}
  \begin{minipage}[t]{0.4\linewidth}
  \centering
  \includegraphics[scale=0.11]{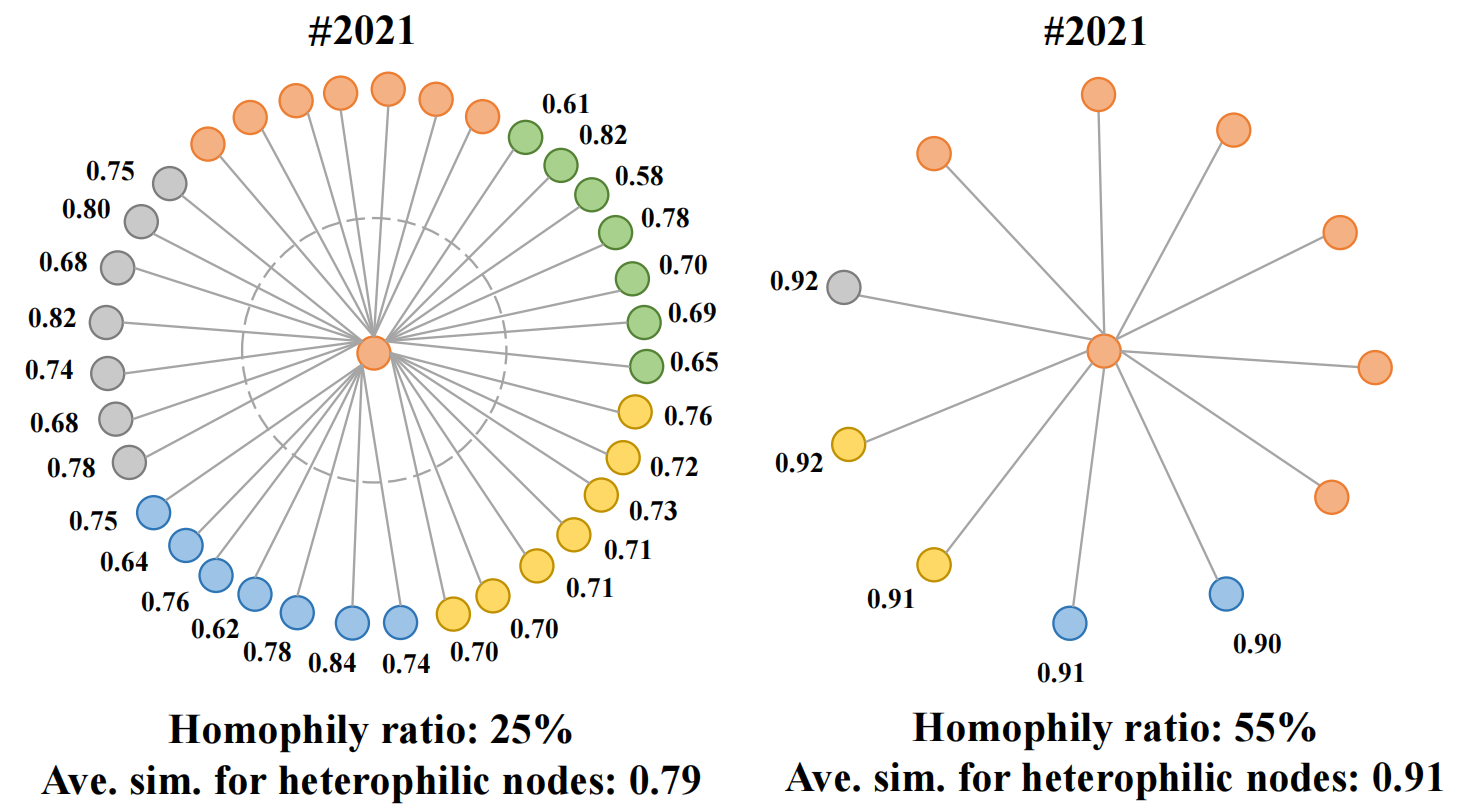}
  \end{minipage}%
  }%
  \vspace{-10pt}
  \caption{The visualization of
the latent structure learned by the proposed LHS.}
  \vspace{-20pt}
  \end{figure*}

\vspace{-5pt}
\subsubsection{Comparisons without attacks.}
We have shown that our model is more robust than existing methods under various attacks. To further investigate the performance without attacks, we conduct experiments on the five heterophilic graphs and compare ours with baseline approaches. Table \ref{tab:accuracy} shows that the proposed LHS performs best and we attribute this to the information aggregation in a homophilic way on heterophilic graphs. Additionally, we also achieve better or comparable classification results on three benchmarks for homophilic graphs. This suggests that improving homophily for both homophilic and heterophilic graphs benefits node classification, and this also remotely aligns with a previous work~\cite{yan2021two}, showing that our method can handle both types of graphs in a unified manner.

\subsection{Discussion}
\label{sec:discussion}
\noindent\textbf{Can LHS reduce the scale of ``right-shift'' of $\mathcal{H}$ distributions?}
\label{sec:right-shift-ood}
We have discussed that the ``right-shift'' phenomenon, i.e., the structural OOD, is the cause of performance degradation under attacks in the Introduction Section. To answer this question, we visualize how our method reduces the ``right-shift'' for experiments in Table \ref{fig:intro-attack} on Squirrel. Under the ``injected evasion attack'', Figure \ref{fig:dis-shift} shows that our latent structure can greatly move the $\mathcal{H}$ distribution of the attacking sample to the left, thus reducing the ``right-shift'' (see the red arrow). We also observe that the second round of refinement can further move distribution to the left side, further improving the model's robustness. However, we find that existing SimP-GCN can slightly move the distribution, visually explaining why LHS is more robust than Simp-GCN. This further confirms our hypothesis that reducing ``right-shift'' can harden the GCN over heterophilic graphs.

\noindent\textbf{What does the learnable homophilic structure look like?}
For this question, we rely on a tool Gephi~\cite{bastian2009gephi} to visualize the structure of the test set of Squirrel and the homophilic structure of LHS, with a truncated threshold $\sigma$ set as $0.91$. Figure \ref{fig:vis:ratio} demonstrates the visualizations of the two structures respectively. We can observe that there are many more homophilic edges in our structure, compared with the one of the original test set. We also study the connection of the node $\#$2021 of Squirrel. Figure \ref{fig:vis:case} shows that the homophily ratio, which is computed by $1-\mathcal{H}v_{2021}$, is significantly increased from $25\%$ to $55\%$ after the structure latent learning. In this case, we observe that our model reduces the heterophilic connections, while keeping the homophilic edges unchanged.

\noindent\textbf{Can the learnable homophilic structure be applied to other tasks?}
To answer this question, we also apply the homophilic structure learned on four graphs, including Wisconsin, Squirrel, Chameleon, and Cora, to the graph clustering task. We use vanilla GCN~\cite{kipf2017semisupervised} and the proposed structure inducer of LHS to develop ``GCN + Structure Inducer''. Even on the vanilla GCN, Table~\ref{tab:cluster} shows that our ``GCN + Structure Inducer'' outperforms all other baselines on heterophilic
graphs. For example, ours outperforms the Simp-GCN on Squirrel by $2.79$ points. We attribute such again to our homophilic structure.

\begin{table}[h]
  \centering
  \vspace{-7pt}
  \caption{Performance Comparisons on graph clustering}
  \vspace{-8pt}
  \resizebox{\linewidth}{!}{
    \begin{tabular}{c|ccc}
    \hline
          & Wisconsin & Squirrel & Chameleon  \\
    \hline
    SimP-GCN & $58.42$ & $38.57$ & $46.44$  \\
    BMGCN & $54.92$ & $40.26$ & $50.17$  \\
    AGC   & $43.71$ & $32.98$ & $$35.78$$  \\
     GCN + Structure Inducer &  \textbf{61.32} & \textbf{41.36} & \textbf{52.37}  \\
     \hline
    \end{tabular}}%
  \label{tab:cluster}%
\end{table}%

\vspace{-7pt}

\section{Conclusion}
This paper studies robust graph convolution networks over heterophilic graphs. We take the first step towards quantitatively analyzing the robustness of GCN approaches over omnipresent heterophilic graphs for node classification, and reveal that the vulnerability is mainly caused by the structural out-of-distribution (OOD). Based on this crucial observation, we present LHS, a novel method that aims to harden GCN against various attacks by learning latent homophilic structures on heterophilic graphs. Our LHS can iteratively refine the latent structure during the learning process, facilitating the model to aggregate information in a homophilic way on heterophilic graphs. Extensive experiments on various benchmarks show the effectiveness of our approach. We believe our structure can also benefit more graph tasks for better representation learning. Future work could focus on the development of novel adversarial training methods based on the structural OOD.


\section{Acknowledgments}
This work was partially supported by the National Key R\&D Program of China (Grant No.2022YFB2902200), Major Projects of National Natural Science Foundation of China (Grant  No.72293583), and the Joint Funds for Regional Innovation and Development of the National Natural Science Foundation of China (No. U21A20449).

\bibliography{aaai24}
\appendix
\onecolumn

\section{\textbf{Appendix}}

\section{1. Structure Learning and ``Right-Shift" Phenomenon}
The ``right-shift" of distribution, namely structural OOD, is caused by the high heterophilic neighbor structure, and learning the homophilic neighbor structure can be seen as one of the direct ways to "refrain from right-shift".
In this section, we provide the following analysis from a spectral perspective, aiming to bridge the gap between ``refrain from right-shift" and structure learning, thus elaborating the rationale of LHS to improve classification robustness.


\subsection{1.1. Theoretical Analysis}
Given a graph signal $\mathbf x$ defined on a graph $G$ with normalized Laplacian matrix $L=I-\tilde{D}^{-\frac{1}{2}} \tilde{A} \tilde{D}^{-\frac{1}{2}}$, and the laplacian regularization term can be proposed as follows:
\begin{equation}
\mathbf{X}^{T} L \mathbf{ X}=\frac{1}{2} \sum_{i, j} \tilde{\mathbf{A}}_{i j}\left(\frac{\mathbf{x}_{i}}{\sqrt{1+d_{i}}}-\frac{\mathbf{x}_{j}}{\sqrt{1+d_{j}}}\right)^{2} .
\end{equation}\label{lap}
where $d_{i}$ and $d_{j}$ denotes the degree of nodes $v_{i}$ and $v_{j}$ respectively. Thus a smaller value of $\mathbf{x}^{T} L \mathbf{x}$ indicates a more similar graph signal, i.e., a smaller signal difference between adjacent nodes. This laplacian regularization can also be seen as a smooth or average between two nodes signals (features).
Specifically, from the perspective of graph signals optimization, we will show the relationship between GCN structure learning and the ``right-shift" phenomenon.
\newtheorem{theorem}{Theorem}
\begin{theorem}
From the perspective of graph signal optimization, the laplacian regularization of Eq.~\ref{lap}, is equivalent to the GCN convolutional operator, which is responsible for structure learning.
\end{theorem}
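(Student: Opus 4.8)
The plan is to read the Laplacian regularization term $\mathbf{X}^{T} L \mathbf{X}$ as the smoothness penalty of a graph-signal denoising problem, and then show that a single gradient-descent step on that objective reproduces exactly the symmetric normalized propagation matrix $\tilde{D}^{-\frac{1}{2}} \tilde{A} \tilde{D}^{-\frac{1}{2}}$ that defines one GCN convolution layer. Since $L = I - \tilde{D}^{-\frac{1}{2}} \tilde{A} \tilde{D}^{-\frac{1}{2}}$, the core of the argument is purely algebraic once the right optimization problem, step size, and weight $\lambda$ are chosen.

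First I would introduce a learnable embedding $\mathbf{Z}$ and pose the denoising objective $g(\mathbf{Z}) = \|\mathbf{Z}-\mathbf{X}\|^{2} + \lambda\, \mathbf{Z}^{T} L \mathbf{Z}$, whose fidelity term keeps $\mathbf{Z}$ close to the input features while the regularizer $\lambda\,\mathbf{Z}^{T} L \mathbf{Z}$ enforces smoothness along the edges (recalling from Eq.~\ref{lap} that a small value of this term means adjacent nodes carry similar signals). Computing the gradient $\nabla g(\mathbf{Z}) = 2(\mathbf{Z}-\mathbf{X}) + 2\lambda L \mathbf{Z}$ and evaluating it at the initialization $\mathbf{Z}=\mathbf{X}$, the fidelity gradient vanishes and we are left with $\nabla g = 2\lambda L \mathbf{X}$.

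Next I would take one gradient-descent step from $\mathbf{X}$ with learning rate $1$, giving $\mathbf{X} - \nabla g(\mathbf{X}) = (I - 2\lambda L)\mathbf{X}$. Substituting $L = I - \tilde{D}^{-\frac{1}{2}} \tilde{A} \tilde{D}^{-\frac{1}{2}}$ and choosing $\lambda = \tfrac{1}{2}$ collapses the update to $(I - L)\mathbf{X} = \tilde{D}^{-\frac{1}{2}} \tilde{A} \tilde{D}^{-\frac{1}{2}} \mathbf{X}$, which is precisely the GCN aggregation operator (up to the trainable weight and nonlinearity). This establishes the claimed equivalence and, through it, the interpretation that GCN convolution performs structure-driven smoothing; over heterophilic edges it therefore averages features of different classes and drives the $\mathcal{H}$-distribution ``right-shift'', which is exactly what motivates the homophilic structure learning in LHS.

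I expect the main obstacle to be pinning down the sense in which the two objects are \emph{equivalent}: the correspondence is not an identity of global minimizers but a one-step matching that holds only under the specific choices of initialization $\mathbf{Z}=\mathbf{X}$, learning rate $1$, and $\lambda=\tfrac{1}{2}$. The proof should therefore state these hypotheses explicitly and argue that they are the natural ones, rather than claim a stronger unconditional equivalence; with the hypotheses fixed, the remaining algebra is routine.
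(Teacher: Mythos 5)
Your proposal matches the paper's proof essentially step for step: the same denoising objective with a Laplacian smoothness penalty, the same gradient evaluated at $\mathbf{Z}=\mathbf{X}$, one gradient-descent step with unit learning rate, and the choice $\lambda=\tfrac{1}{2}$ collapsing $(I-2\lambda L)\mathbf{X}$ to $\tilde{D}^{-\frac{1}{2}}\tilde{A}\tilde{D}^{-\frac{1}{2}}\mathbf{X}$. If anything, your version is slightly cleaner, since you write the regularizer correctly as $\lambda\,\mathbf{Z}^{T}L\mathbf{Z}$ (the paper's displayed objective has a typo with $\mathbf{X}$ in place of $\mathbf{Z}$) and you make explicit the initialization and the one-step sense in which the ``equivalence'' holds.
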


\begin{proof}
First, the signal optimization with a laplacian regularization term can be formally proposed as follows:
\begin{equation}
\arg \min _{\mathbf{Z}} g(\mathbf{Z})=\left\|\mathbf{Z}-\mathbf{X}\right\|^{2}+\lambda \mathbf{X}^{T} L \mathbf{ X},
\end{equation}\label{opGCN}
where $\mathbf Z$ is the learned node embeddings, which can be used for downstream tasks such as node classification.

Then the gradient of $g(\mathbf{Z})$ at $\forall \mathbf{x} \in \mathbf X$ is calculated as:
\begin{equation}
\nabla g\left(\mathbf{x}\right)=2\left(\mathbf x-\mathbf x\right)+2 \lambda L\mathbf{x}_{0}=2 \lambda  L\mathbf{x}_{0} .
\end{equation}
And the one step gradient descent at $\forall \mathbf{x} \in \mathbf X$ with learning rate 1 is formulated as:
\begin{equation}
\begin{aligned}
\mathbf x-\nabla g\left(\mathbf x\right) &=\mathbf x-2 \lambda L \mathbf x \\
&=(\mathbf{I}-2 \lambda L )\mathbf x \\
&=\left(\tilde{{D}}^{-\frac{1}{2}} \tilde{{A}} \tilde{{D}}^{-\frac{1}{2}}+L-2 \lambda L \right) \mathbf x .
\end{aligned}
\end{equation}
By setting $\lambda$ to $\frac{1}{2}$, we finally arrives at:
\begin{equation}
\mathbf x-\nabla g\left(\mathbf x\right)=\tilde{{D}}^{-\frac{1}{2}} \tilde{{A}} \tilde{{D}}^{-\frac{1}{2}} \mathbf x .
\end{equation}
\end{proof}

The structure learning of the GCN convolution operator can be seen as a graph signal optimization with a laplacian regularization term. Hence, the graph convolution layer tends to \textbf{\textit{increase the feature similarity between the connected nodes}}. Therefore, for heterophilic graphs, this structure learning will make neighbor heterophilic nodes have more similar embeddings, thus leading to poor classification performance. Based on this analysis, we introduce the rationality of our proposed LHS.

\subsection{1.2. The Rationale behind LHS}
In order to mitigate the ``right-shift" phenomenon, which is caused by the GCN structure learning on inherent heterophilic structure, we propose LHS, a robust structure learning method on heterophilic graphs. It has the following two rationales that drew inspiration from the structural OOD problem:

\begin{itemize}
\item R1: For potential homophilic relationships, we aim to obtain the refined structure with more close homophilic connections, thus playing a positive role in the GCN structure learning.
\item R2: For potential heterophilic relationships, our proposed LHS aims to suppress the negative edge connections, thus mitigating the "right-shift" phenomenon. We proposed a truncated threshold GCN to ensure that the negative edges with low confidence can preserve high similarity, thus mitigating the performance deterioration when GCN conducts structure learning.

\end{itemize}

The proposed LHS designs customized structure learning components to realize the above two rationales, which have not been completely considered by the existing structure learning methods. For example, the previous work~\cite{jin2021universal}~\cite{jin2021node} considered the homophilic structure learning with high feature similarity, while they only utilize the pair-wise information of node features, i.e. kNN networks. Furthermore, they can not handle the potential heterophilic nodes which can hardly filter, while LHS proposes a truncated threshold GCN to only preserve the high-similarity connections, thus mitigating the performance deterioration.

\subsection{1.3. The compatibility of LHS with sampling methods.}
The proposed LHS is compatible with various sampling methods for contrastive learning. The positive instances and the negative instances, which are selected by a sampling method, can be fed into the contrastive learner to generate node representations. We follow the previous work~\cite{zhu2020deep} to use a popular sampling method in the proposed dual-view contrastive learner in our paper.
\subsection{1.4. The graph tasks that could benefit from the proposed LHS}
We additionally list the following tasks and give underlying reasons as follows.
\begin{itemize}
    \item \textbf{GCN Model robustness} aims to protect the GCN models from various attacks. This task is under-explored on heterophilic graphs. The latent homophilic structure can effectively harden graph models when facing OOD or injected attacks.
    \item \textbf{Link prediction} aims to induce the edge relationships between nodes~\cite{xiong2023ground}. The latent homophilic structure learned by LHS captures the homophilic information of nodes and benefits the similar-feature-based link prediction tasks, such as similar product recommendations, and social recommendations with the same interests, etc.
    \item \textbf{High-order graph tasks} aim to make predictions on high-order structures, such as a community~\cite{cavallari2019embedding,qiu2022fast}. The community is considered the smallest unit of prediction in these tasks and is hard to handle by the existing pairwise methods. The proposed latent homophilic structure can benefit these tasks by generating a refined latent structure.
\end{itemize}

\subsection{1.5. Real-world example of the vulnerability of GCN models under malicious threats}
Figure 1 (d) of the main paper demonstrates a malicious attack that can significantly degrade the classification accuracy of a system by $29.30\%$. We refer to the previous work~\cite{zhu2020beyond} that involves real-world examples in fraud detection. We inject attacks, including malicious heterophilic nodes or spurious relations, into target nodes on a real-world social network, e.g., ``the Actor''. Experimental results on ``the Actor'' in Table~1 of the main paper further confirm the vulnerability of GCN models under the above threats. We will follow the reviewer's suggestion to include these examples to enhance the demonstration of the real-world value of our work.

\section{2. The Complexity Analysis of LHS}
We analyze the time and model complexity of LHS and provide some acceleration strategies.
\subsection{2.1. Time Complexity}
The time complexity of LHS mainly comes from the denoising SVD decomposition in the robust structure learner. A naive time complexity is $O(rN^2)$, where $r=4K+1 \ll N$, and $K$ is the number of node classes. In general, the time complexity of LHS is $O(N^2)$, which is more suitable for the small dataset. For large-scale datasets, in order to balance time cost and performance, based on random projection principle~\cite{halko2011finding}, we introduce the randomized SVD to further reduce the time complexity to $O(r\rm {log}\it (N)N)$, which is an acceptable complexity. Then, we can also generate the node representations within a local structure that consists of neighbor nodes instead of the computation throughout the whole graph
\subsection{2.2. Model Complexity}
For model complexity, benefit from the sampled learning strategy, for both robust structure learner and re-mask feature augmenter, have high scalability for large-scale datasets, because they only need to employ contrastive learning or reconstructing on those sampled nodes features. One the other hand, the parameters introduced are always transforming the node feature dimension to a smaller dimension. To sum up, LHS only introduces additional parameters that is linear to the feature dimension.


\section{3. Structural OOD Evidence}
We provide more empirical evidence on structural OOD as well as the insights gained from interesting observation as follows.
\subsection{3.1. Single-hop Layer}
First, we show the empirical single-hop structural OOD observation on Wisconsin as Fig.~\ref{wis}, Squirrel as Fig.~\ref{squ}, Chameleon as Fig.~\ref{cha}, and the `right-shift' of structural distribution can be seen between training set and test set; this structural OOD will lead to poor robustness. We also study the \textbf{counter example} of heterophilic graphs, as a homophilic graph, Cora can hardly be found the structural OOD problem in both four single-hop layers.
\vspace{-10pt}

   \begin{figure*}[t]
        \centering
        \subfigure[]{
        \begin{minipage}[t]{0.23\linewidth}
        \centering
        \includegraphics[scale=0.3]{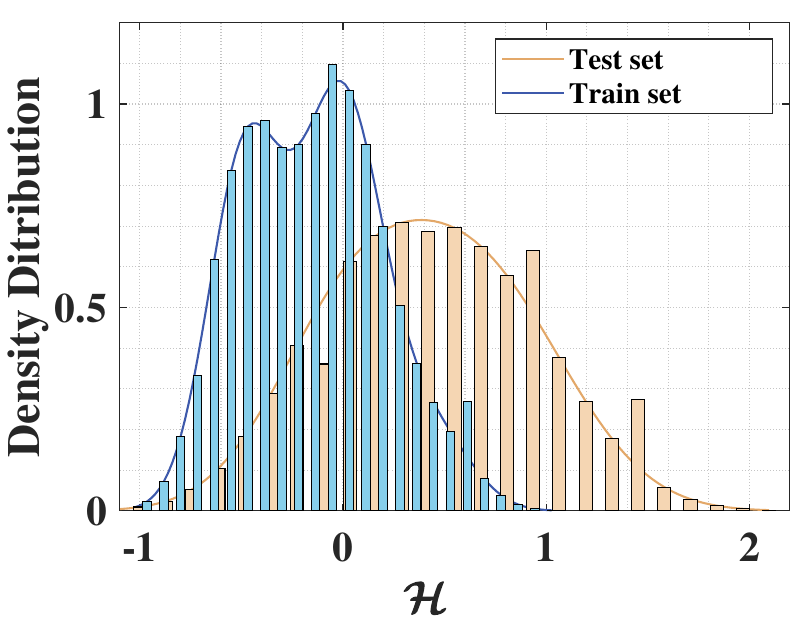}
        \end{minipage}%
        }%
        \subfigure[]{
        \begin{minipage}[t]{0.23\linewidth}
        \centering
        \includegraphics[scale=0.3]{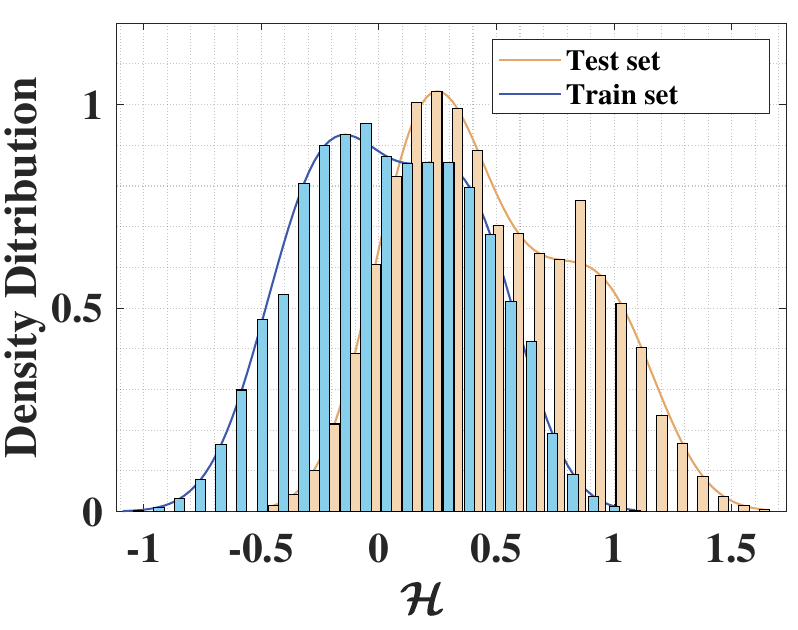}
        \end{minipage}%
        }%
        \subfigure[]{
        \begin{minipage}[t]{0.23\linewidth}
        \centering
        \includegraphics[scale=0.3]{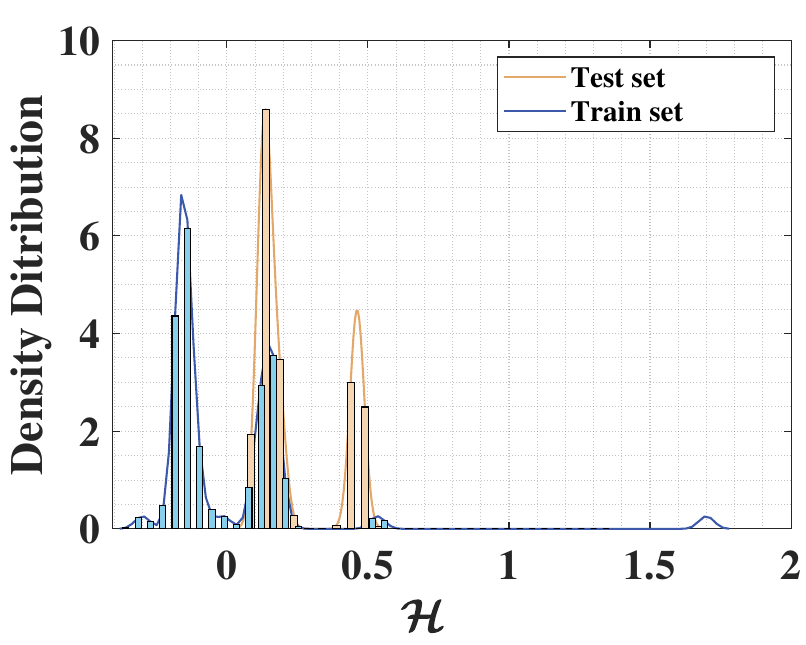}
        \end{minipage}
        }%
        \subfigure[]{
        \begin{minipage}[t]{0.23\linewidth}
        \centering
        \includegraphics[scale=0.3]{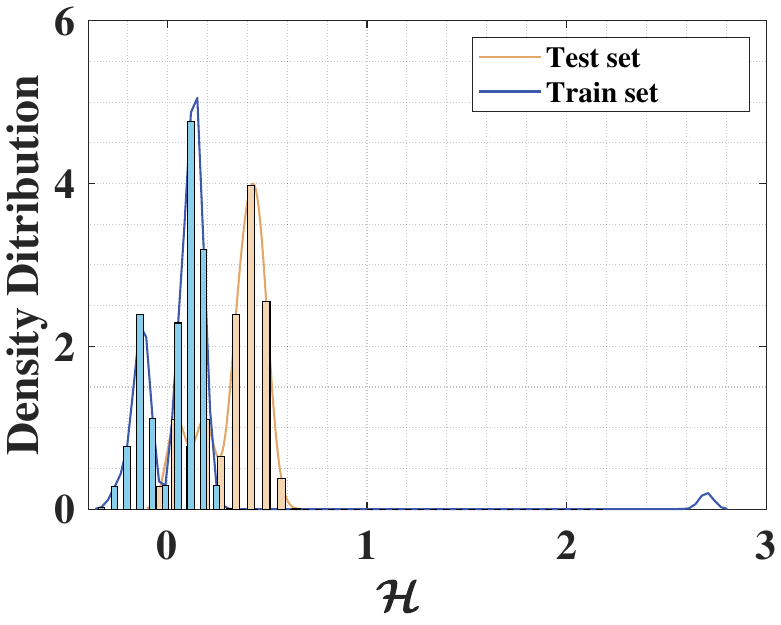}
        \end{minipage}
        }%
        \centering
        \vspace{-10pt}
        \caption{The `right-shift' phenomenons on the Wisconsin dataset.}
        \label{wis}
  \end{figure*}
\vspace{-10pt}

\begin{figure*}[htbp]
  \centering
  \subfigure[]{
  \begin{minipage}[t]{0.23\linewidth}
  \centering
  \includegraphics[scale=0.3]{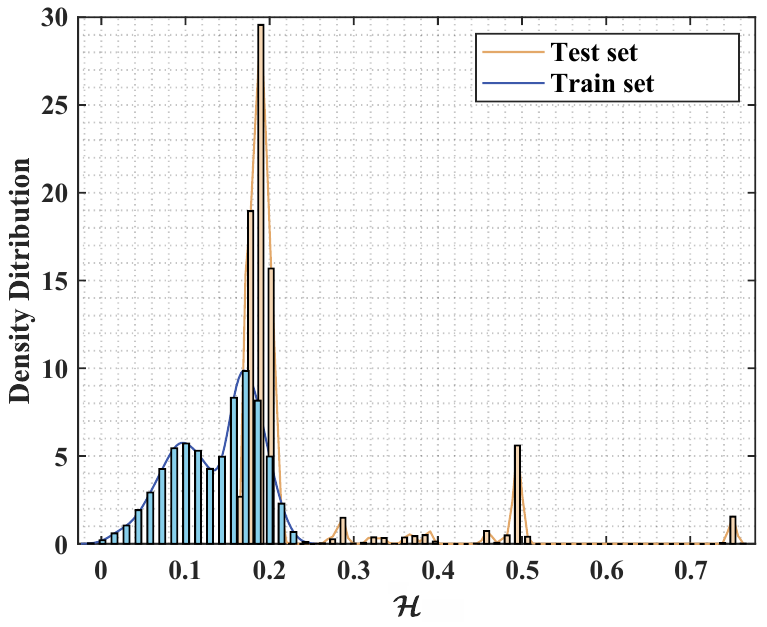}
  \end{minipage}%
  }%
  \subfigure[]{
  \begin{minipage}[t]{0.23\linewidth}
  \centering
  \includegraphics[scale=0.3]{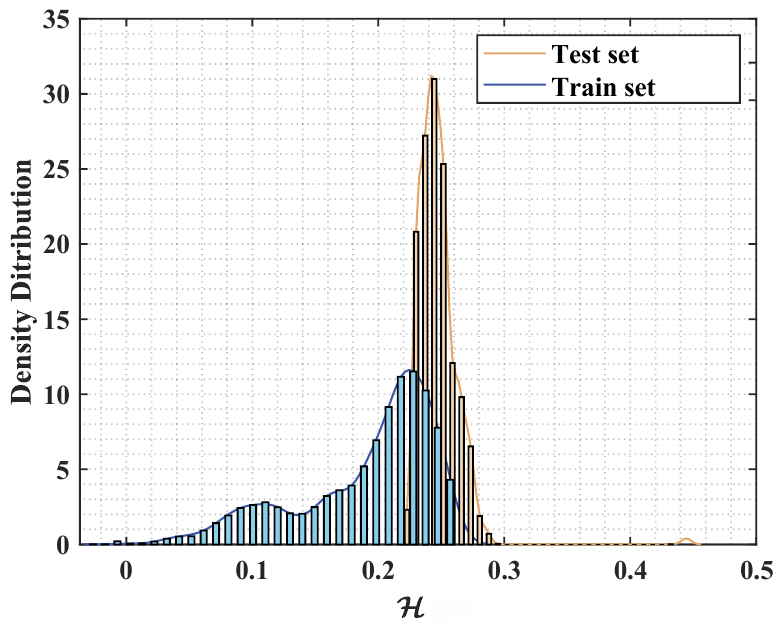}
  \end{minipage}%
  }%
  \subfigure[]{
  \begin{minipage}[t]{0.23\linewidth}
  \centering
  \includegraphics[scale=0.3]{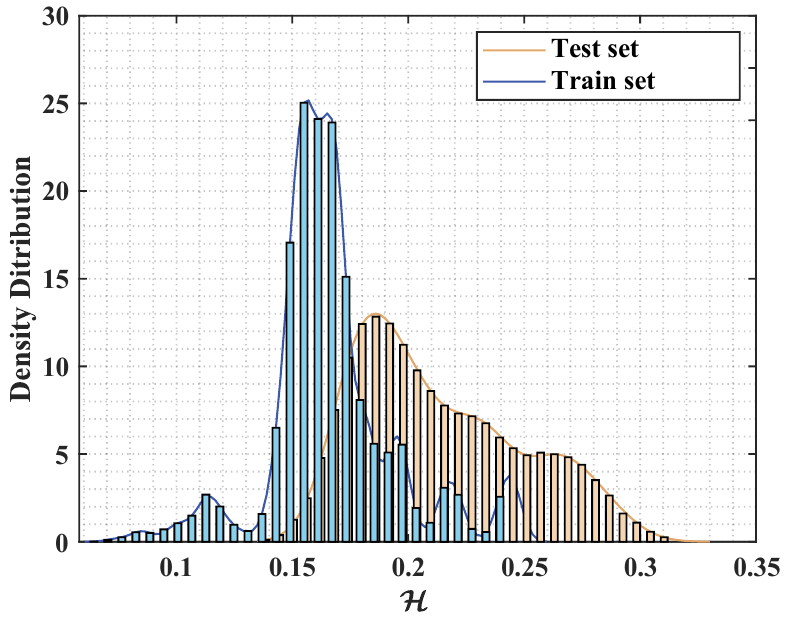}
  \end{minipage}
  }%
  \subfigure[]{
  \begin{minipage}[t]{0.23\linewidth}
  \centering
  \includegraphics[scale=0.3]{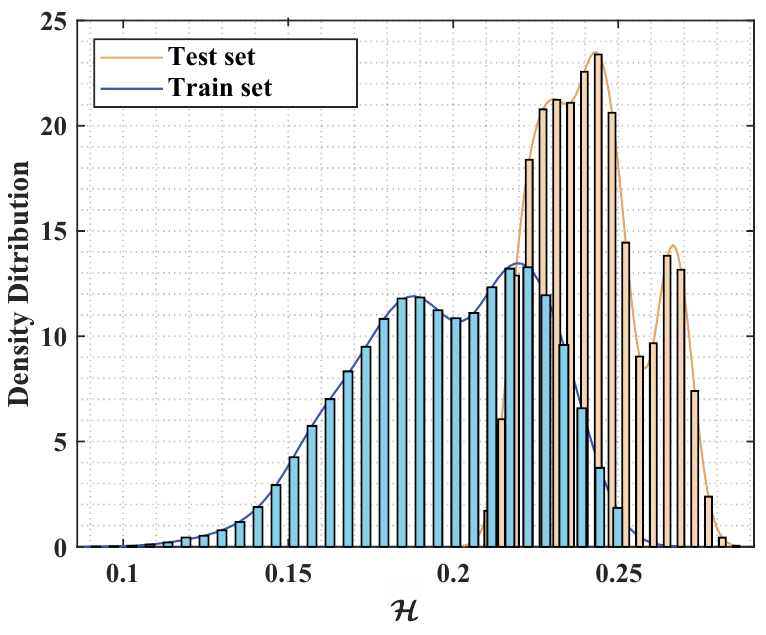}
  \end{minipage}
  }%
  \centering
  \vspace{-10pt}
  \caption{The `right-shift' phenomenons on the Squirrel dataset.}
  \label{squ}
 \end{figure*}

\begin{figure*}[htbp]

  \centering
  \subfigure[]{
  \begin{minipage}[t]{0.23\linewidth}
  \centering
  \includegraphics[scale=0.3]{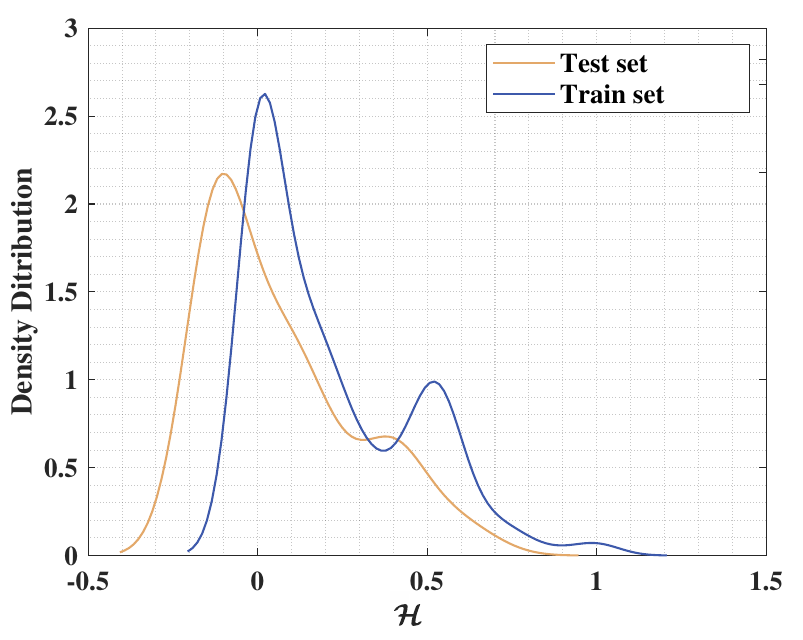}
  \end{minipage}%
  }%
  \subfigure[]{
  \begin{minipage}[t]{0.23\linewidth}
  \centering
  \includegraphics[scale=0.3]{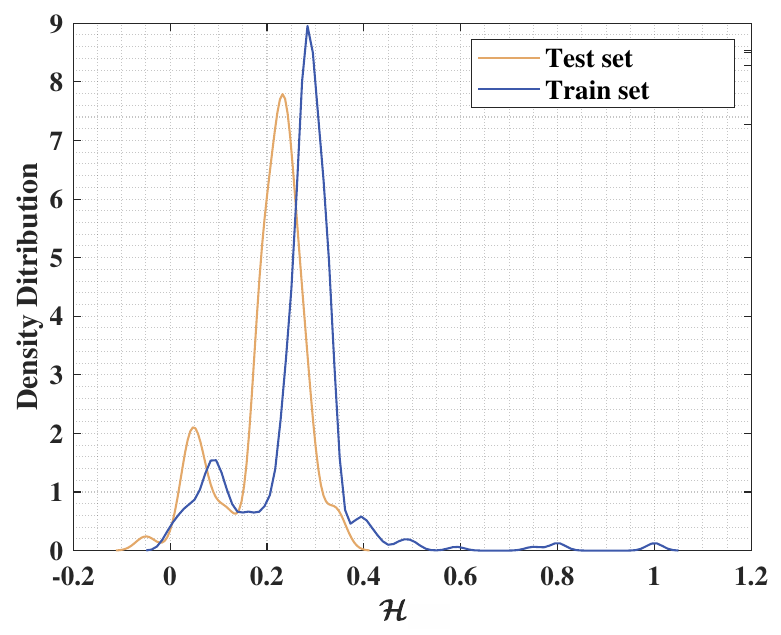}
  \end{minipage}%
  }%
  \subfigure[]{
  \begin{minipage}[t]{0.23\linewidth}
  \centering
  \includegraphics[scale=0.3]{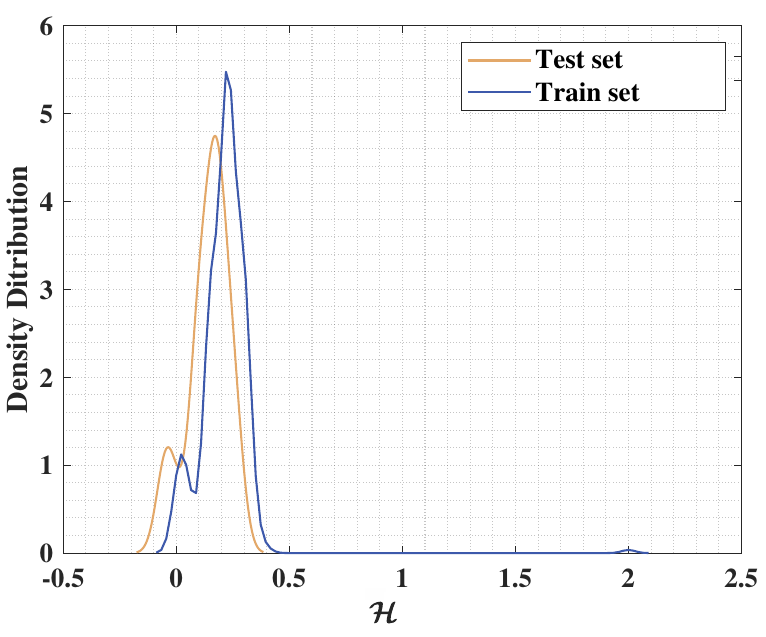}
  \end{minipage}
  }%
  \subfigure[]{
  \begin{minipage}[t]{0.23\linewidth}
  \centering
  \includegraphics[scale=0.3]{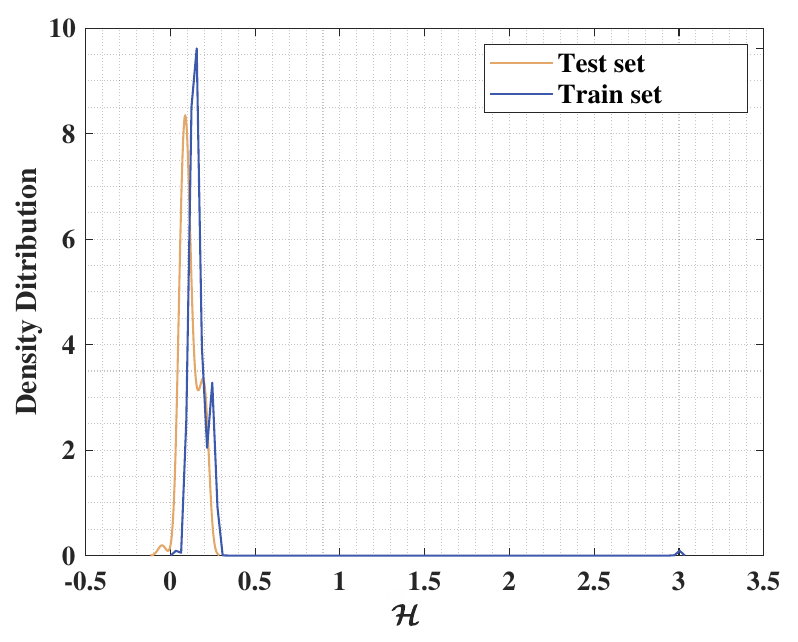}
  \end{minipage}
  }%
  \centering
  \vspace{-1em}
  \caption{The `right-shift' phenomenons on the Chameleon dataset.}
  \label{cha}
\end{figure*}

\begin{figure*}[htbp]
  \centering
  \subfigure[]{
  \begin{minipage}[t]{0.23\linewidth}
  \centering
  \includegraphics[scale=0.3]{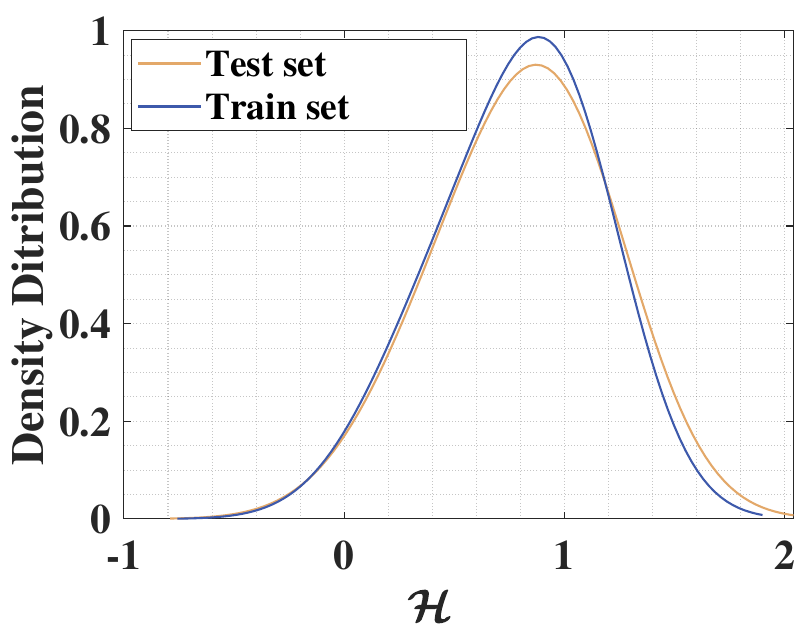}
  \end{minipage}%
  }%
  \subfigure[]{
  \begin{minipage}[t]{0.23\linewidth}
  \centering
  \includegraphics[scale=0.3]{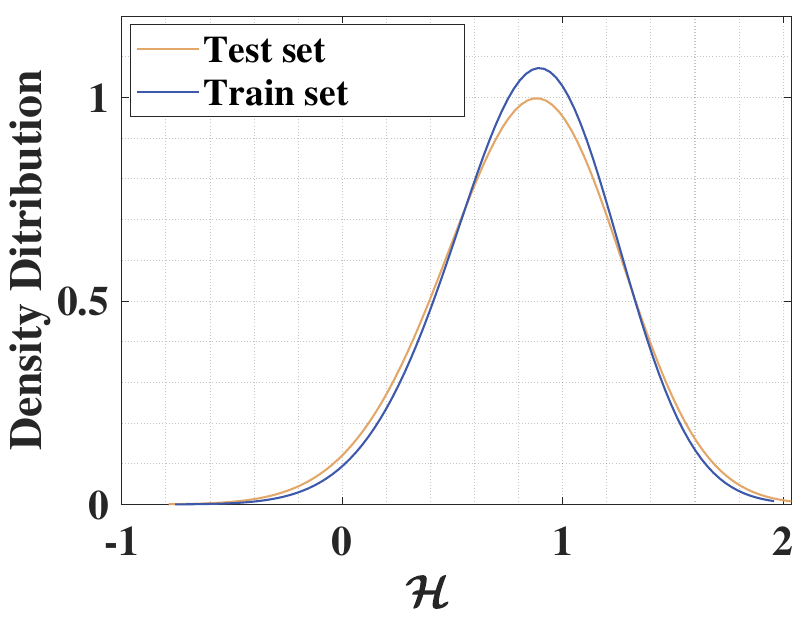}
  \end{minipage}%
  }%
  \subfigure[]{
  \begin{minipage}[t]{0.23\linewidth}
  \centering
  \includegraphics[scale=0.3]{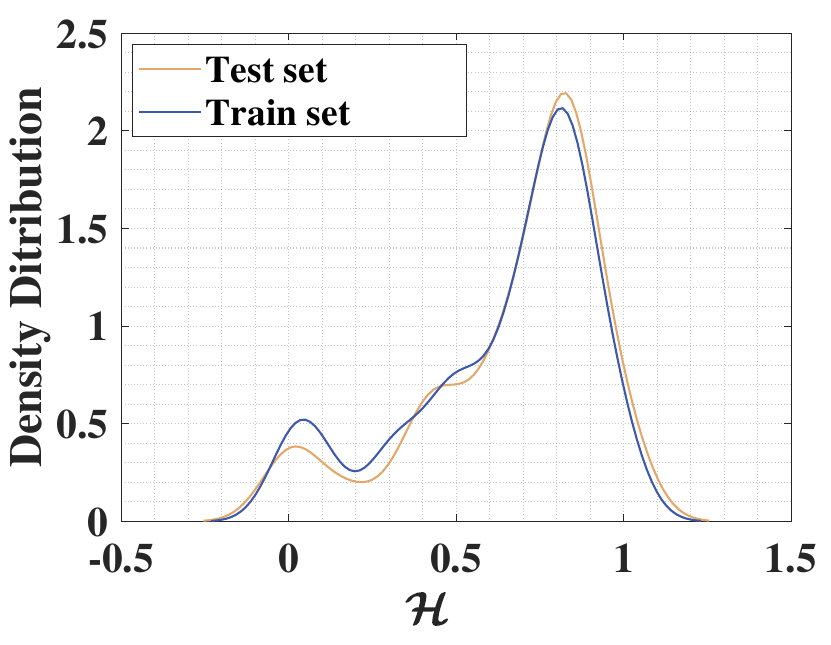}
  \end{minipage}
  }%
  \subfigure[]{
  \begin{minipage}[t]{0.23\linewidth}
  \centering
  \includegraphics[scale=0.3]{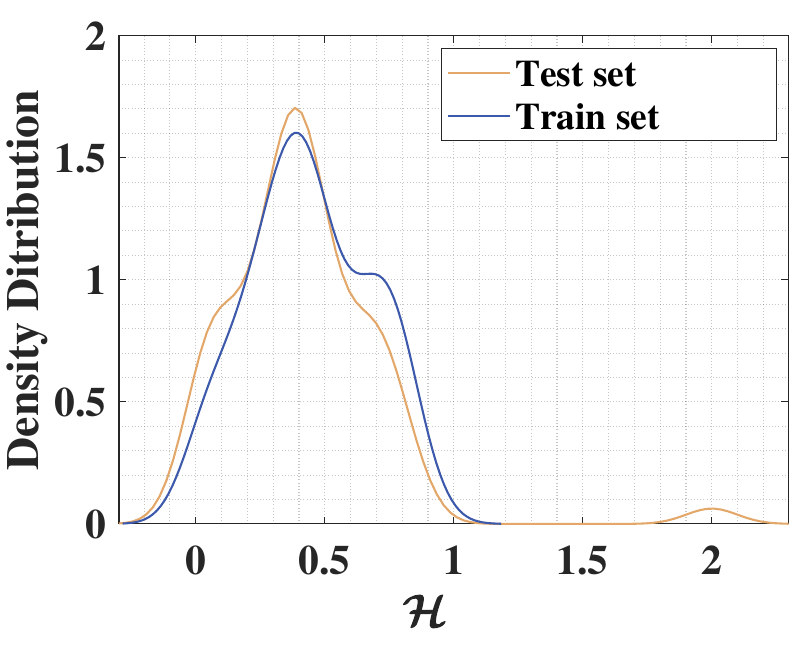}
  \end{minipage}
  }%
  \centering
  \caption{The counter-example Cora dataset, is free of the `right-shift' phenomenons.}
\end{figure*}

\clearpage
\subsection{3.2. Multi-hop Layer}
We also provide the multi-hop joint distribution of edges ($\mathcal H$). The `right-shift' edge distribution becomes more serious on heterophilic graphs that there is always a different degree of shift when each time we generate the distribution. While the homophilic graph Cora hardly exists the distribution shift, even the in multi-hop situation.
\begin{figure*}[h]
  \centering
  \subfigure[Three layer of Cora]{
  \begin{minipage}[t]{0.23\linewidth}
  \centering
  \includegraphics[scale=0.3]{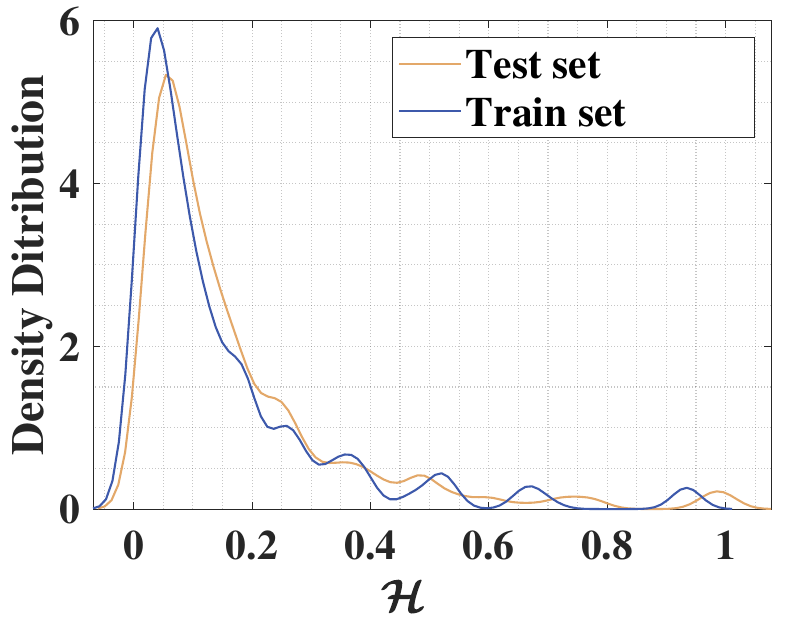}
  \end{minipage}%
  }%
  \subfigure[Three layer of Squirrel]{
  \begin{minipage}[t]{0.23\linewidth}
  \centering
  \includegraphics[scale=0.15]{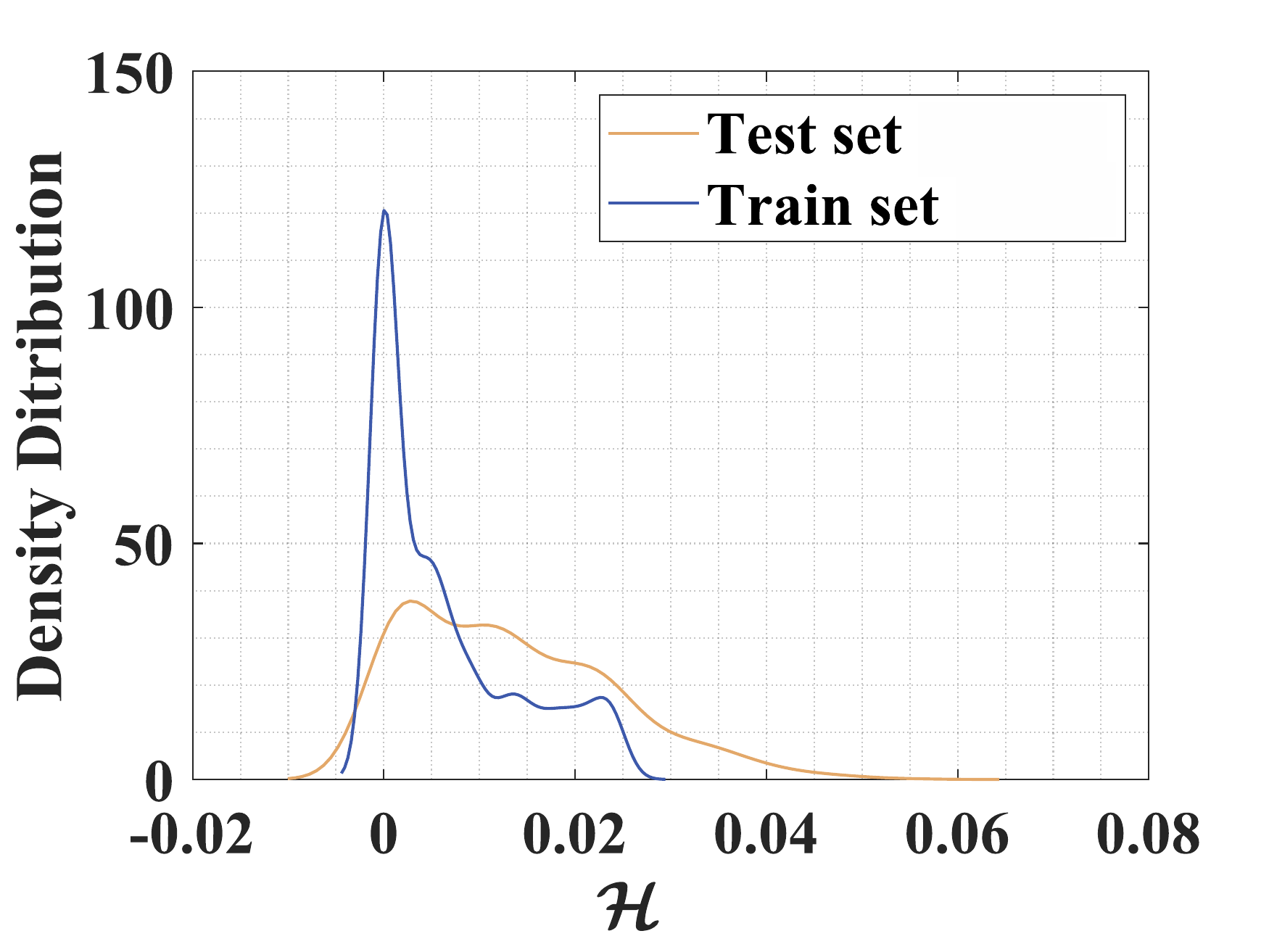}
  \end{minipage}%
  }%
  \subfigure[Three layer of Chameleon]{
  \begin{minipage}[t]{0.23\linewidth}
  \centering
  \includegraphics[scale=0.3]{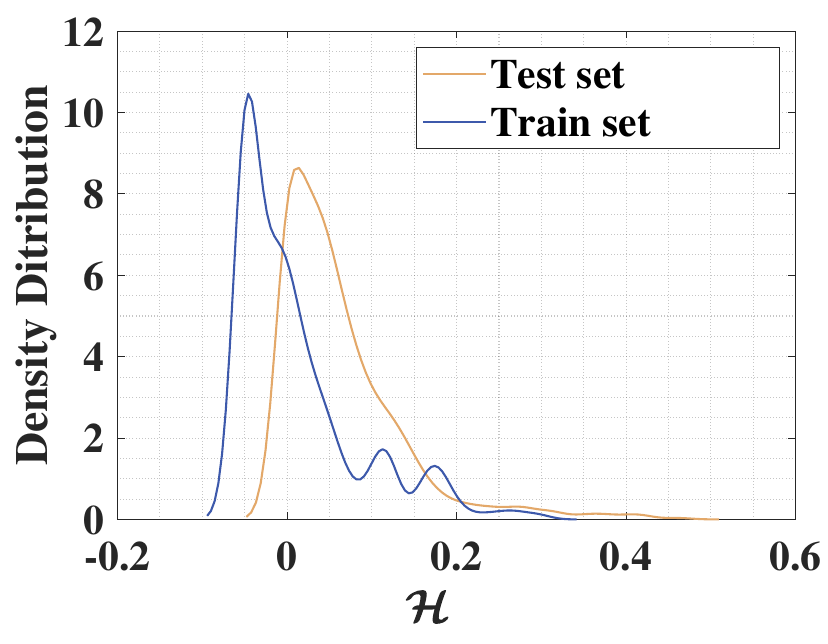}
  \end{minipage}
  }%
  \subfigure[Three layer of Wisconsin]{
    \begin{minipage}[t]{0.23\linewidth}
    \centering
    \includegraphics[scale=0.3]{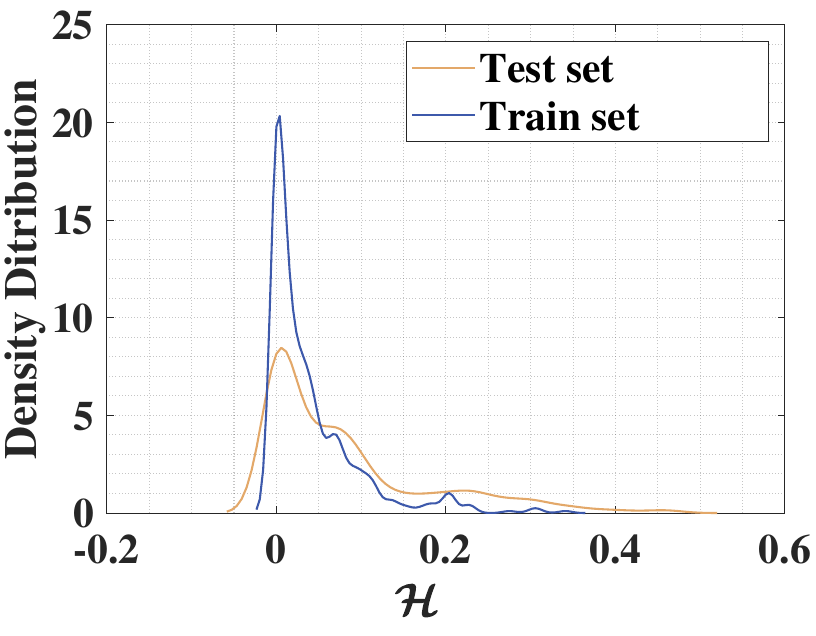}
    \end{minipage}
    }%

  \centering
  \caption{The multi-hop edge distribution on different datasets.}
\end{figure*}

\section{4. Experiments Details}
\subsection{4.1. Datasets Information}
The statistical information about the eight real-world networks is given in Table~\ref{Inf}, including the number of classes, the dimension of features, and the number of nodes and edges.
\begin{table*}[h]
\centering
\caption{Statistical information of eight real-world datasets.}
  \begin{tabular}{c|cccccccc}
  Dataset & Cora & Citeseer & PubMed & Chameleon & Squirrel & Actor & Texas & Cornell \\ \hline
  Classes & $7$ & $6$ & $5$ & $5$ & $5$ & $5$ & $5$ &$5$ \\
  Features & $1433$ & $3703$ & $500$ & $2325$ & $2089$ & $932$ & $1703$ & $1703$ \\
  Nodes & $2708$ & $3327$ & $19717$ & $2277$ & $5201$ & $7600$ & $183$ & $183$ \\
  Edges & $5278$ & $4552$ & $44324$ & $31371$ & $198353$ & $26659$ & $279$ & $277$
  \end{tabular}\label{Inf}
  \end{table*}

\subsection{4.2. Split ratio of dataset.}
For fair comparisons, we follow the settings of previous work~\cite{Pei2020Geom-GCN:} to use 60\%, 20\%, and 20\% of the data for training, validation, and testing, respectively. Table 1 and Table 2 of the main paper report the comparison results under such settings.

\subsection{4.3. Experiment Hardware and Setting}
\subsubsection{Hardware}
All experiments are performed on a Linux Machine with Intel(R) Xeon(R) Gold $6330$ CPU with $28$ cores, $378$GB of RAM, and a NVIDIA
RTX$3090$ GPU with $24$GB of GPU memory.
\subsubsection{Additional Setting}
We use Adam Optimizer with
the $\beta_{1}=0.9,\beta_{2}=0.999,\epsilon=1 \times 10^{-8}$ on all datasets.
We run epochs $\in \{300, 1000\}$ with the learning rate $\in \{0.000005, 0.0001\}$ and apply early stopping with a patience of $40$.
We use PReLU as activation and the scaling factor $\tau \in$ ${0, 2}$. According to the cross-entropy loss and ACC, we get a best model.
On the validation set, we set hidden unit $\in \{16, 32\}$, learning rate $\in \{0.00005, 0.0001\}$, dropout in each layer $\in \{0.1, 0.2\}$
and the weight decay is $5e-4$. Our method is implemented by Pytorch and Pytorch Geometric.

\section{5. The Limitation of Previous Structure Learning Strategy}
\label{sec:limitation-structure-learning}
A large amount of current structure learning methods focus on the feature-level similarity to conduct structure learning~\cite{jin2021universal}\cite{jin2021node}, which always formed poor similarity and can not reflect the true relationship between node pairs.
\begin{equation}
\mathbf{S}_{i j}=\frac{\mathbf{x}_{i}^{\top} \mathbf{x}_{j}}{\left\|\mathbf{x}_{i}\right\|\left\|\mathbf{x}_{j}\right\|}
\end{equation}
Now we provide empirical evidence to support our claim. To sum up, following the kNN method, we have calculated the cosine similarity on both homophilic graphs~\cite{yang2016revisiting} (Cora, Citeseer, Pubmed) and heterophilic graphs~\cite{Pei2020Geom-GCN:} (Texas, Wiscon, Squirrel) as follows. We found that almost all node pairs that have formed bad similarities are only between $0.1$ and $0.5$. More seriously, there is no significant similarity difference between the same and different classes of nodes, whether they are between pairs or average levels. The results reveal that we can hardly distinguish nodes in different classes under such metrics and thus fail to learn a better structure to mitigate the ``right-shift" phenomenon.

We respectively generate structures via the kNN network method and our proposed LHS. First, we compare the overall average similarity of the structures generated by the two methods; as shown in Fig.~\ref{simcora} to Fig.~\ref{simcha} (a), the average similarity of the LHS is between $0.6044$ to $0.7844$ on each dataset, while the average similarity of kNN networks only is $0.0562$ to $0.3451$. The (b) of Fig.~\ref{simcora} to Fig.~\ref{simcha} is the average similarity of sampled node and its homophilic (left) and heterophilic (right) neighbors (we sampled the same and enough neighbors to calculate the average similarity); and under the average level, the homophilic similarities are always higher than the heterophilic similarities constructed by RSGC (left), while getting closer to the ground-truth similarity. However, the similarity constructed by the kNN network (right) is poor and the heterophilic similarities are even higher than the homophilic similarities in Cora, Squirrel, and Chameleon, which makes the heterophilic nodes harder to distinguish, thus led poor structure learning effect. Note that the (b) from Fig.~\ref{simcora} to Fig.~\ref{simcha} represent the statistical similarity from the LHS (dark blue) and kNN network (light blue).


\begin{figure}[H]
    \centering
    \subfigure[]{
    \begin{minipage}[t]{0.4\linewidth}
    \centering
    \includegraphics[width=6cm]{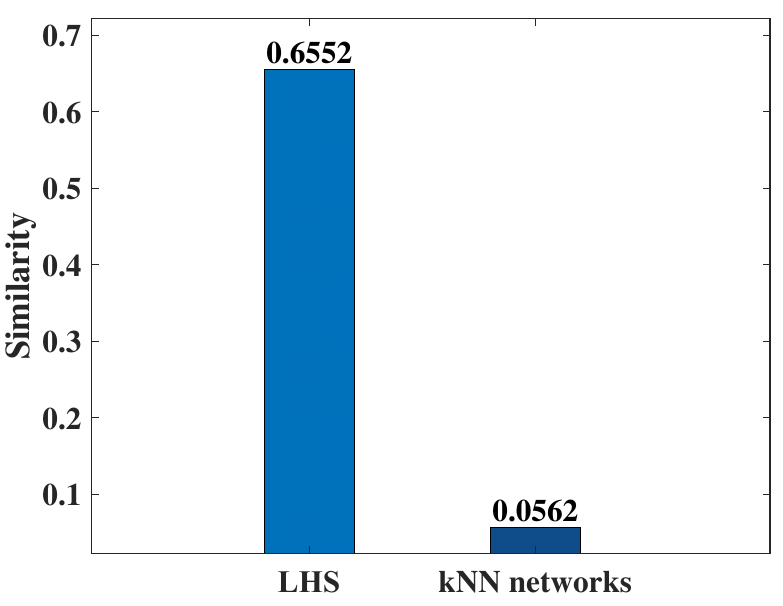}
    \end{minipage}%
    }%
    \subfigure[]{
    \begin{minipage}[t]{0.4\linewidth}
    \centering
    \includegraphics[width=6cm]{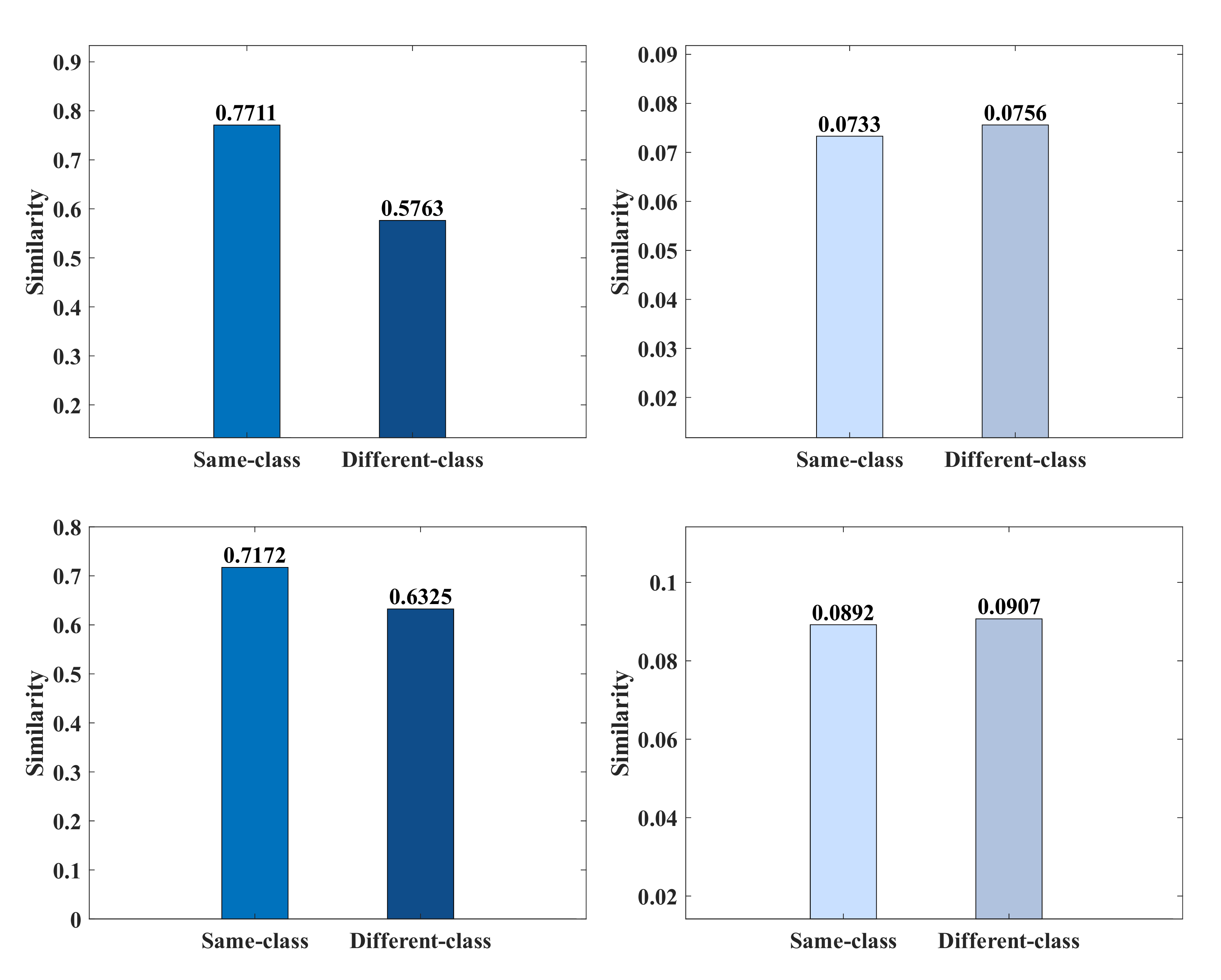}
    \end{minipage}%
    }%
    \centering
    \vspace{-15pt}
    \caption{Similarity comparison on Cora.}
    \label{simcora}
  \end{figure}

  \begin{figure}[H]
    \vspace{-20pt}
      \centering
      \subfigure[]{
      \begin{minipage}[t]{0.4\linewidth}
      \centering
      \includegraphics[width=6cm]{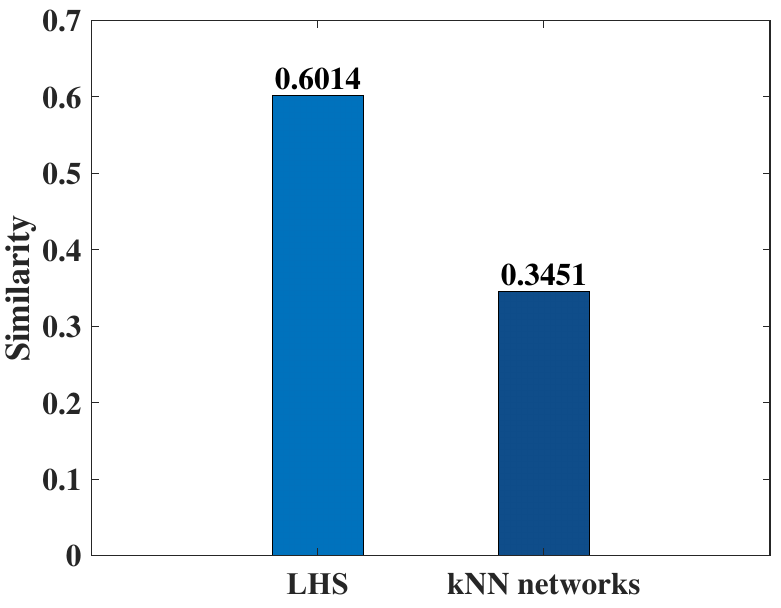}
      \end{minipage}%
      }%
      \subfigure[]{
      \begin{minipage}[t]{0.4\linewidth}
      \centering
      \includegraphics[width=6cm]{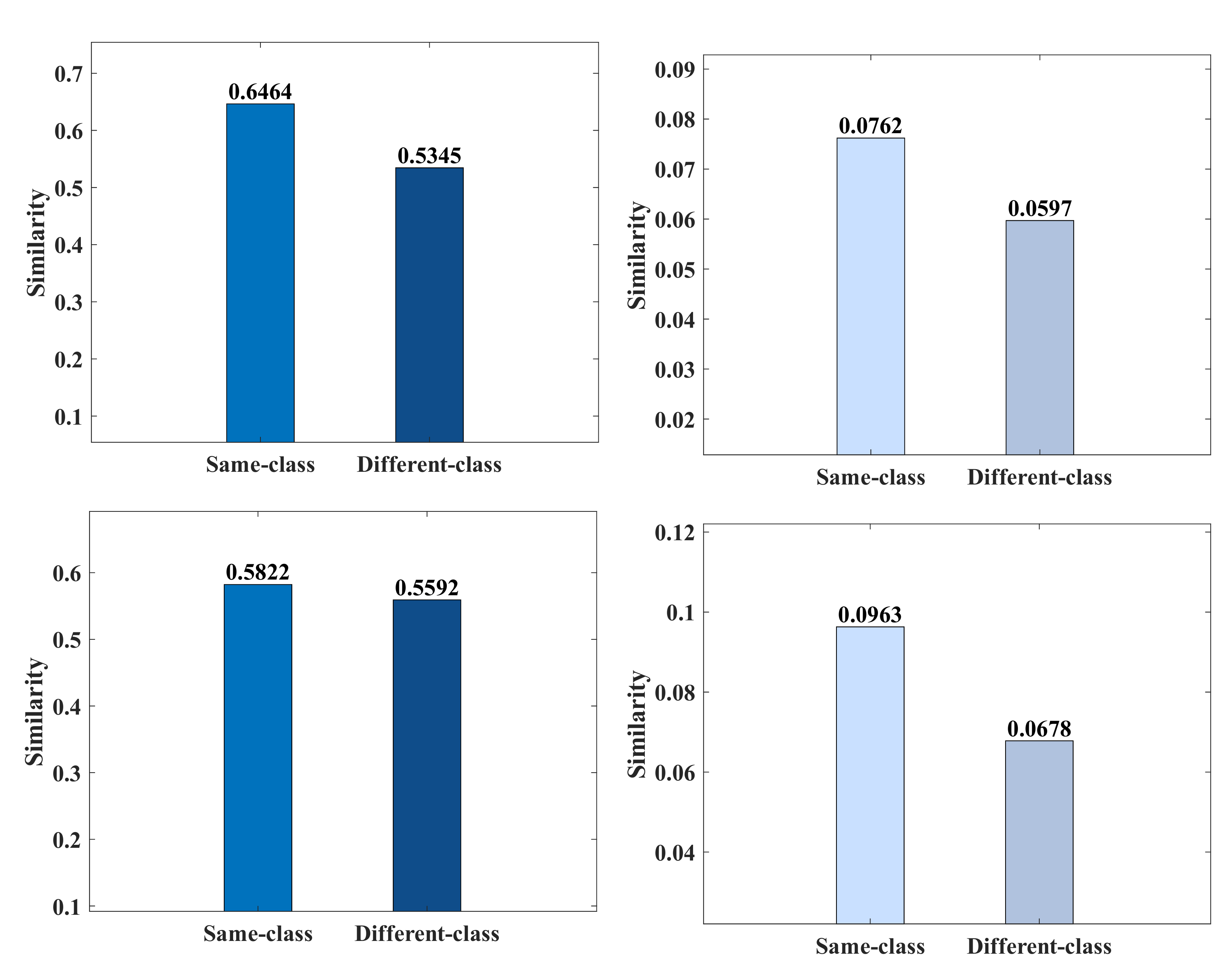}
      \end{minipage}%
      }%
      \centering
      \vspace{-15pt}
      \caption{Similarity comparison on Texas.}
      \label{simtex}
  \end{figure}

  \begin{figure}[H]
    \vspace{-20pt}
      \centering
      \subfigure[]{
      \begin{minipage}[t]{0.4\linewidth}
      \centering
      \includegraphics[width=6cm]{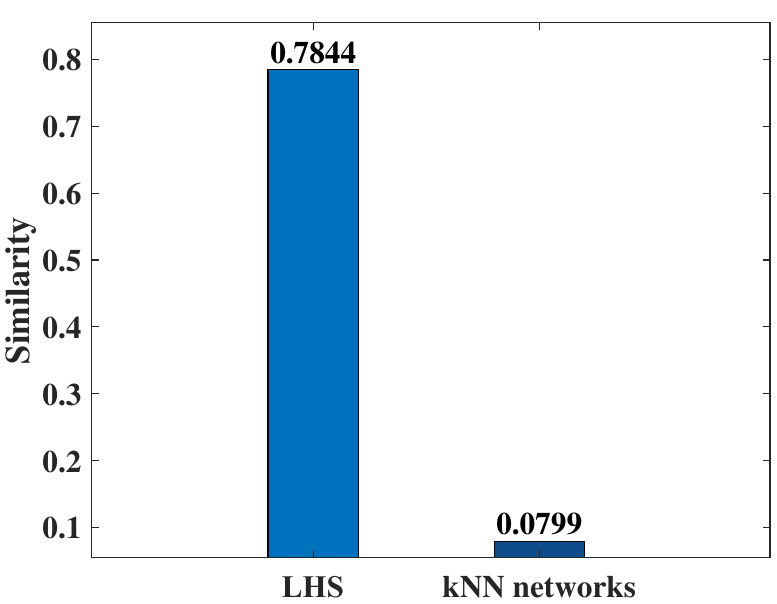}
      \end{minipage}%
      }%
      \subfigure[]{
      \begin{minipage}[t]{0.4\linewidth}
      \centering
      \includegraphics[width=6cm]{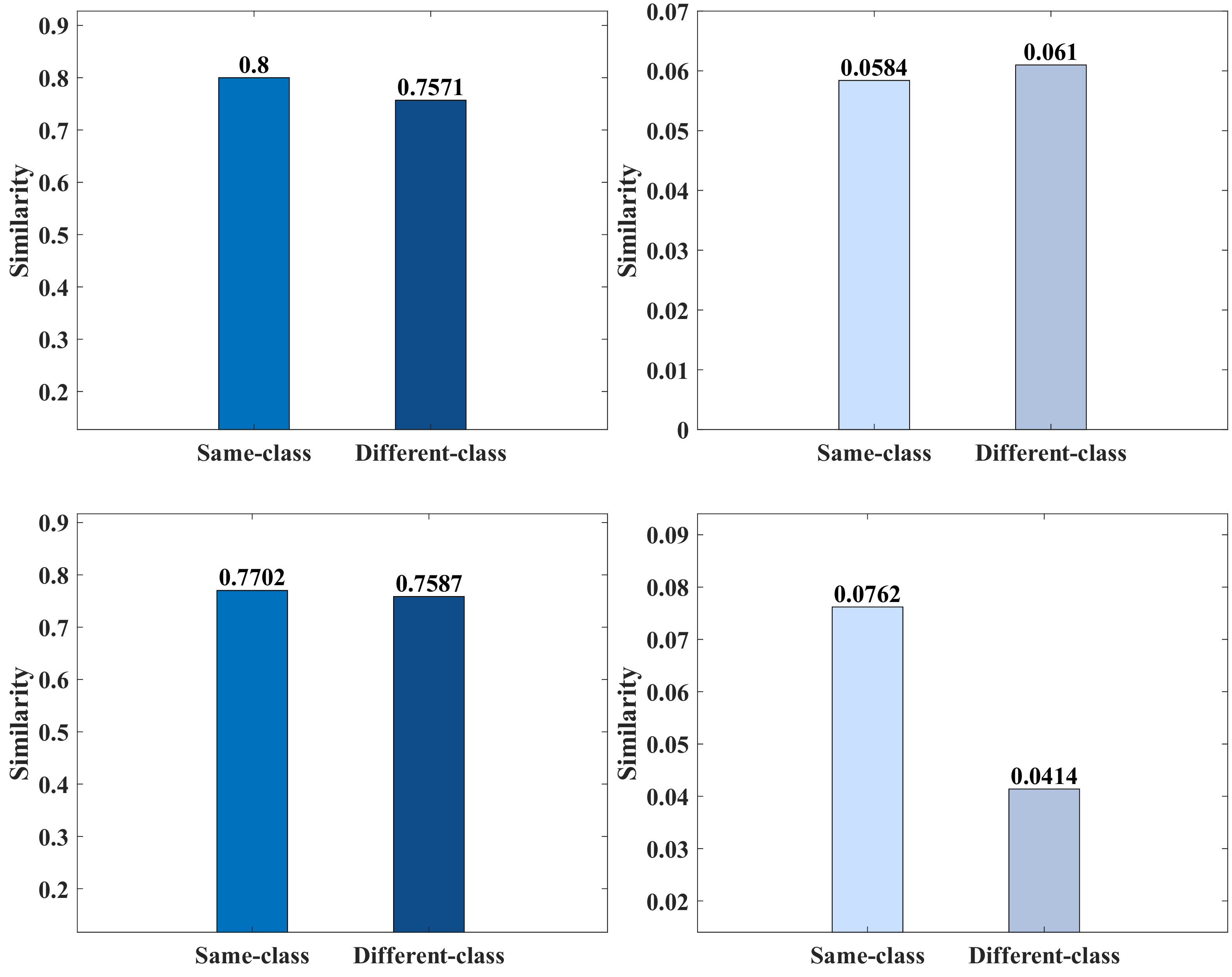}
      \end{minipage}%
      }%
      \centering
      \vspace{-15pt}
      \caption{Similarity comparison on Squirrel.}
      \label{simsqu}
  \end{figure}

  \begin{figure}[H]
      \centering
      \subfigure[]{
      \begin{minipage}[t]{0.4\linewidth}
      \centering
      \includegraphics[width=6cm]{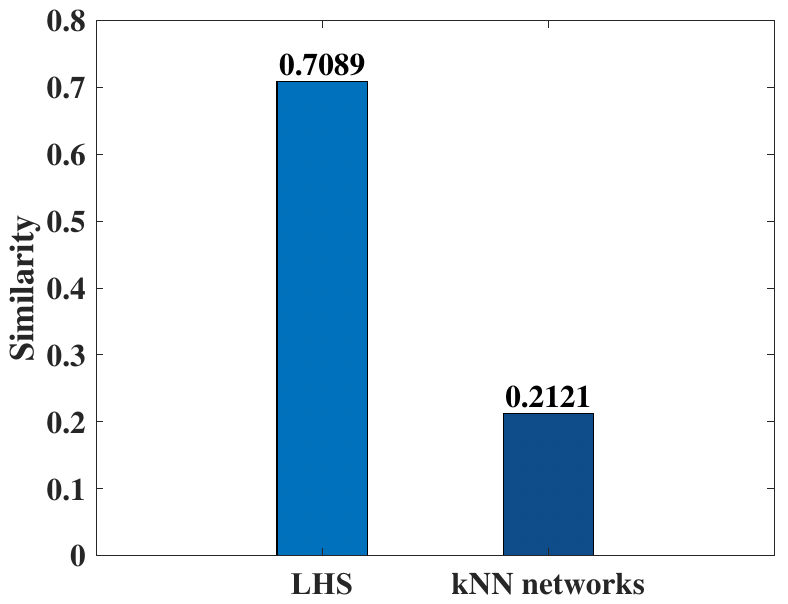}
      \end{minipage}%
      }%
      \subfigure[]{
      \begin{minipage}[t]{0.4\linewidth}
      \centering
      \includegraphics[width=6cm]{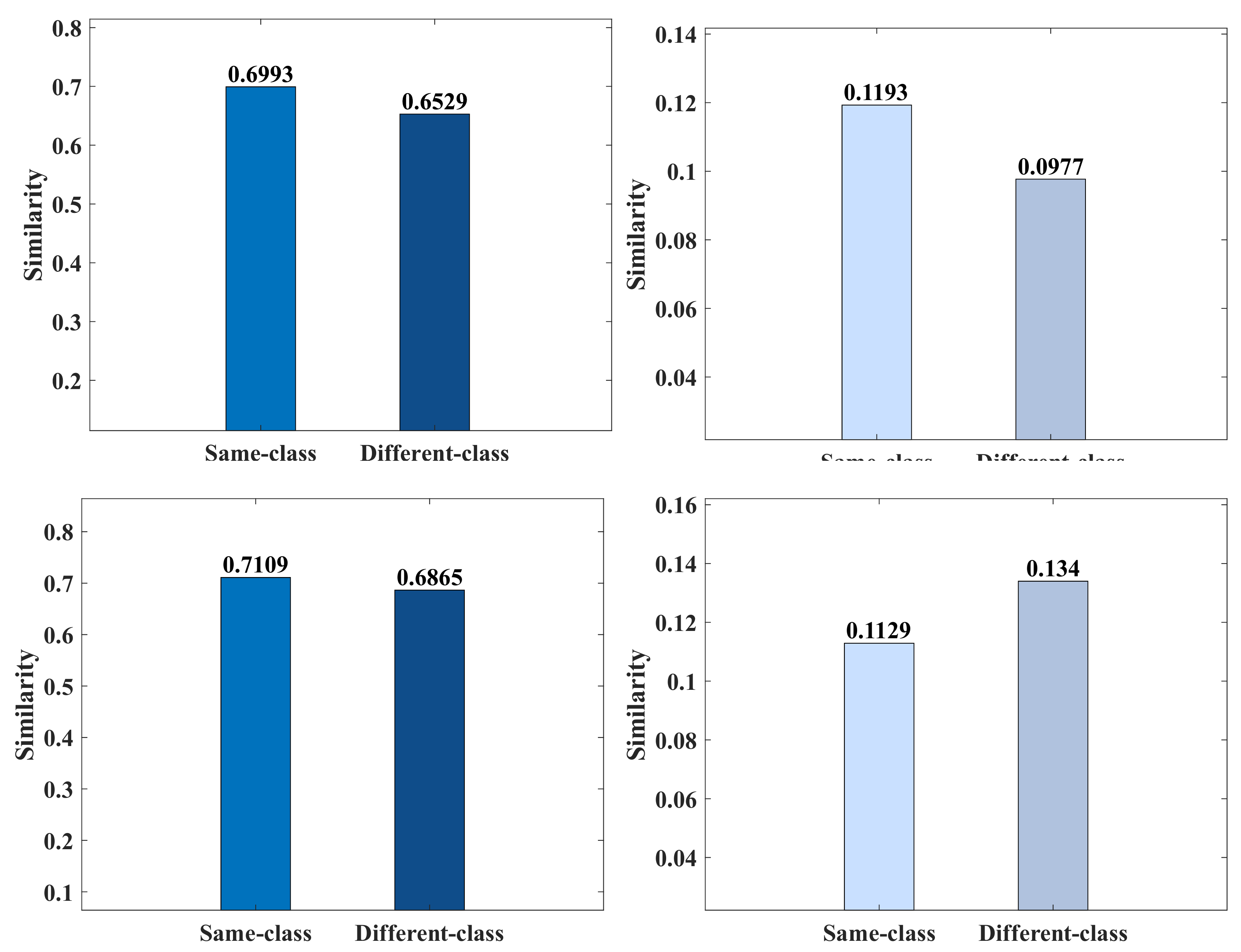}
      \end{minipage}%
      }%
      \centering
      \vspace{-15pt}
      \caption{Similarity comparison on Chameleon.}
      \label{simcha}
    \end{figure}

\end{document}